\title[Gaussian mixture layers]{Gaussian mixture layers for neural networks}
\pgfplotsset{compat=1.17}
\def\set@curr@file#1{\def\@curr@file{#1}} 
\let\Ginclude@graphics\@org@Ginclude@graphics
\begin{document}

\maketitle

\begin{abstract}
The mean-field theory for two-layer neural networks considers infinitely wide networks that are linearly parameterized by a probability measure over the parameter space. This nonparametric perspective has significantly advanced both the theoretical and conceptual understanding of neural networks, with substantial efforts made to validate its applicability to networks of moderate width.
In this work, we explore the opposite direction, investigating whether dynamics can be directly implemented over probability measures. Specifically, we employ Gaussian mixture models as a flexible and expressive parametric family of distributions together with the theory of Wasserstein gradient flows to derive training dynamics for such measures. Our approach introduces a new type of layer---the Gaussian mixture (GM) layer---that can be integrated into neural network architectures.
As a proof of concept, we validate our proposal through experiments on simple classification tasks, where a GM layer achieves test performance comparable to that of a two-layer fully connected network. Furthermore, we examine the behavior of these dynamics and demonstrate numerically that GM layers exhibit markedly different behavior compared to classical fully connected layers, even when the latter are large enough to be considered in the mean-field regime.
\end{abstract}



\section{Introduction}\label{sec:intro}

Deep learning architectures are compositions of basic trainable \emph{layers}, and many major milestones of deep learning models can be traced back to innovations for these fundamental components.
For example, convolutional neural networks are based on \emph{convolution layers} and \emph{pooling layers}; residual neural networks are based on layers with \emph{skip connections}~\citep{He+16Res}; and more recently, the striking feats of large language models stem from the use of \emph{attention layers}~\citep{Vas+17Attention}.
Yet without guiding principles to map out the potential design space, the development of successful new layers has been largely elusive.

Whereas the vast majority of works on the theory of deep learning focus on understanding and explaining the behavior of existing architectures, in this paper we take the approach of applying the theory to propose new ones.
In particular,  we incorporate insights from the recent literature on the mean-field theory for neural networks~\citep{chizat2018global, MeiMonNgu18MeanField, MeiMisMon19MFNN, SirSpi20MeanFieldCLT, Chi22MFLangevin, NitWuSuz22MFLangevin, RotVan22NNInteracting} in order to propose a new type of layer, which we call the \emph{Gaussian mixture} (GM) \emph{layer}.

According to mean-field theory, which we briefly review in Section~\ref{sec:mf}, the training dynamics of a fully connected 2-layer neural network converges, under a natural scaling and in the limit of infinite width, to a Wasserstein gradient flow in the space of probability measures.
Here, the probability measure represents the \emph{distribution} of weights in the fully connected layer and thereby shifts our perspective from the evolution of individual neurons to the evolution of the collection thereof.
This framework has been successful at making nuanced and verifiable predictions about the training of certain simple neural networks~\citep{MeiMonNgu18MeanField, AbbAdsMis23LeapComplexity, BerMonZho24TimeScales}, but thus far the mean-field limit has mostly been used as a tool for analysis.
Indeed, it is not practical to actually implement the Wasserstein gradient flow, since doing so would require prohibitively large widths.

In this work, we explore the consequences of a \emph{prescriptive} mean-field rather than a descriptive one. This proposal results in manipulating infinite dimensional objects---distributions over $\R^d$--- and to implement it, we restrict the distribution over weights to a parametric \emph{Gaussian mixture} whose parameters are trained via standard optimization routines. The use of mixture modelling can also be motivated on grounds of clustering phenomena for neurons but we do not explore this question here.

Although our GM layer is proposed as a replacement for a wide fully connected layer, the parametrization and restriction of the distribution of neurons to the class of finite Gaussian mixtures leads to markedly different training dynamics.
We demonstrate this behavior through numerical experiments in Section~\ref{sec:exp} on the MNIST database, which also serve as a proof of concept for the incorporation of GM layers into neural network architecture design.
Our initial results are promising and show that a GM layer attains comparable test performance to a 2-layer fully connected network.
However, we stress that our goal is \emph{not} to demonstrate superiority of the GM layer over existing architectures which would necessitate going beyond the simple two-layer architecture for which the mean-field theory applies. As a result, while we demonstrate that the GM layer is modular and can be integrated into deep architectures, we leave a detailed investigation to future research as it would require substantial engineering developments that are beyond the scope of this proposal. 

\paragraph{Contributions.}
Our primary contribution is our Gaussian layer proposal, which we detail in Section~\ref{sec:gm_layer}.
We also show that whereas the Wasserstein gradient flow over empirical measures is implemented via the Euclidean gradient flow over the locations, the gradient flow over Gaussian mixtures---equipped with a certain natural geometry---is implemented via the Euclidean gradient flow over the means and the \emph{square roots} of the covariances (Theorem~\ref{thm:main_gmm}).

In Section~\ref{sec:implementation}, we suggest an efficient parametrization scheme to speed up implementation.

We conduct numerical experiments in Section~\ref{sec:exp} on the MNIST and Fashion-MNIST datasets, which serve as a proof of concept and provide some insights into the training dynamics of GM layers.
First, we show that GM layers can achieve comparable performance as a $2$-layer fully connected network.
This is despite the fact that the two exhibit quite different training dynamics (see Figure~\ref{fig:evolution}).

We also check that GM layers exhibit ``feature learning''---as is expected in the mean-field regime---in the sense that the distribution over first layer weights moves substantially away from initialization.
Finally, we exhibit performance gains obtained by going deeper (i.e., composing multiple GM layers).

\paragraph{Other related work.}
In recent years, Wasserstein gradient flows have been applied to numerous probabilistic problems, such as sampling and variational inference.
A common bottleneck for these applications is implementation of the flow, which can be achieved via stochastic dynamics in the case of sampling~\citep{JKO} but remains challenging in general.
The mean-field theory reviewed in Section~\ref{sec:mf} shows that the empirical distributions of the weights of a neural network along training indeed follow a Wasserstein gradient flow, but our goal in this work to maintain an evolution of a \emph{continuous} distribution over the weights.
The strategy we take here is to restrict the gradient flow to a parametric family, which was introduced in the context of variational inference for the family of Gaussians or mixtures of Gaussians~\citep{Lam+22GVI, Diao+23FBGVI} and later applied to filtering~\citep{LamBonBac23VarKushner} and mean-field variational inference~\citep{Gho+22MFVI, JiaChePoo23MFVI, Lac23IndepProj, YaoYan23MFVI}.

\section{Review of mean-field theory}\label{sec:mf}

We briefly review the mean-field theory for 2-layer fully connected neural networks.
A width-$m$ neural network computes a function $h_{\bs\omega,\bs\beta} :\R^d\to\R$ of the form
\begin{align*}
    h_{\bs\omega,\bs\beta}(x) = \frac{1}{m} \sum_{j=1}^m \omega_j \ReLU(\langle \beta_j, x\rangle)\,,
\end{align*}
where $\operatorname{\ReLU}\; : \R\to\R$ is the activation function---taken to be the ReLU function $\ReLU(z) = \max(0, z)$ for the rest of this paper---and $(\omega_j,\beta_j) \in \R\times\R^d$ are trainable weights, for each neuron $j\in [m]$.
Here and throughout, we use boldface to denote a collection of parameters.
The $1/m$ scaling above is characteristic of the mean-field regime and enable the following perspective. Let $\rho^{(m)}$ denote the empirical distribution of the weights, $\rho^{(m)} = m^{-1} \sum_{j=1}^m \delta_{(\omega_j,\beta_j)}$, then we can write $h_{\bs\omega,\bs\beta}(x) = \int \omega \ReLU(\langle \beta,x\rangle)\,\rho^{(m)}(\D \omega,\D \beta)$.
In this formulation, however, we can make sense of this expression even when $\rho^{(m)}$ is no longer an empirical measure, and we can view $h$ as being parameterized by a probability measure $\rho$:
\begin{align}\label{eq:hrho}
    h_\rho(x) = \int \omega\ReLU(\langle \beta,x\rangle)\,\rho(\D \omega,\D\beta)\,.
\end{align}

Consider a loss\footnote{The loss typically depends on a training set, but this is irrelevant for our discussion here so it is omitted.} objective $\ms L$ and minimize the objective $\ms L(h_{\bs\omega,\bs\beta})$ by following the gradient flow for the weights ${\{\omega_j,\beta_j\}}_{j\in [m]}$.
This gives rise to a curve of parameters ${(\bs\omega(t), \bs\beta(t))}_{t\ge 0}$, and corresponding empirical measures $\rho^{(m)}(t) = m^{-1} \sum_{j=1}^m \delta_{(\omega_j(t),\beta_j(t))}$.
The insight of mean-field theory is that the evolution of the empirical measures can be described as the (time-rescaled) gradient flow of the loss function $\rho \mapsto \ms L(h_\rho)$ over the space of probability measures, equipped with the Wasserstein metric from optimal transport, initialized at $\rho^{(m)}(0)$.
We refer to~\citet{Vil03Topics, AGS, Vil09OT, San15OT} for background on optimal transport and Wasserstein gradient flows.

The advantage of this reformulation is that it admits a well-defined limit as $m\to\infty$: if $\rho^{(m)}(0) \to \rho(0)$, then the curve of measures ${(\rho^{(m)}(t))}_{t\ge 0}$ converges to the Wasserstein gradient flow of $\rho\mapsto \ms L(h_\rho)$, initialized at $\rho(0)$.
This mean-field limit is, in some cases, easier to study than the original dynamics over the weights, and leads to predictions about the behavior of wide neural networks.

To summarize: the training dynamics of a finite-width neural network correspond to a Wasserstein gradient flow, initialized at (and remaining through its trajectory) an empirical measure, but the Wasserstein gradient flow picture is more general because it allows for flows of continuous measures.
Unfortunately, in the latter case, the Wasserstein gradient flow does not readily lend itself to tractable implementation.
The mean-field theory described above shows that it is \emph{well-approximated} by a gradient flow started at an empirical measure, but this approximation often requires prohibitively large width.
In the next section, we take the familiar approach from statistics of restricting the measures to a parametric family, namely, the set of finite Gaussian mixtures.

\section{A mean-field theory over the space of Gaussian mixtures}\label{sec:gm_layer}

We now introduce the Gaussian Mixture (GM) layer, beginning with the case of a single GM layer (corresponding with the mean-field theory described in Section~\ref{sec:mf}).
Here, we restrict the measure $\rho$ in~\eqref{eq:hrho} to be a Gaussian mixture with $K$ components:
\begin{align}\label{eq:gmm}
    \rho = \rho_{\bs\mu,\bs\Sigma} \deq \frac{1}{K} \sum_{k=1}^K \mc N(\mu_k,\Sigma_k)\,, \qquad \mu_k \in \R^{d+1}\,, \qquad \Sigma_k \in \R^{(d+1)\times (d+1)}\,.
\end{align}
Thus, the measure $\rho$ is now parameterized by a set of means and covariances for the components of the Gaussian mixture.
We use the short-hand notation $h_{\bs\mu,\bs\Sigma} \deq h_{\rho_{\bs\mu,\bs\Sigma}}$.
With this restriction, we can now train the GM layer by minimizing $\ms L(h_{\bs\mu,\bs\Sigma})$ with respect to the parameters $(\bs\mu,\bs\Sigma)$.
For example, in a regression task with a labeled dataset $\{x_i,y_i\}_{i\in [n]}$, we might take the squared loss defined by $\ms L(h) \deq \sum_{i=1}^n (y_i - h(x_i))^2$.

The use of mixture modelling to model the distribution over neurons is motivated by the empirical observation that for many problems, the neurons tend to \emph{cluster}~\citep[e.g.,][]{PapHanDon20NeuralCollapse, Che+23StochasticCollapse}.
Given this ansatz, the use of Gaussian mixtures emerges as a natural model for $\rho$, although other alternatives could be explored.

As discussed in Section~\ref{sec:mf}, it is well-known from mean-field theory that the Wasserstein gradient flow restricted to empirical measures is implemented, up to time rescaling, by the Euclidean gradient flow with respect to the locations of the particles.
When we move to Gaussian mixtures, we effectively replace the particles $\theta_j = (\omega_j, \beta_j)$ with ``Gaussian particles'' $\mc N(\mu_k,\Sigma_k)$, which are themselves distributions over $\theta$ but can be viewed simply as $(\mu_k,\Sigma_k)$ pairs.
In the case $K=1$ of a single Gaussian particle, it turns out that the Wasserstein gradient flow restricted to Gaussian measures is implemented simply by
evolving the parameters $(\mu,C)$ via the (Euclidean) gradient flow, where $\Sigma = CC^\top$.
This is usually known as the \emph{Bures--Wasserstein gradient flow} (see Appendix~\ref{app:interpretation}).

In the case $K\ge 2$, it is no longer possible to follow the Wasserstein gradient flow restricted to Gaussian mixtures.
Indeed, the latter is not explicit, since the space of Gaussian mixtures is not a geodesically convex subset of the Wasserstein space.
However, we can instead follow a Wasserstein gradient flow for the \emph{mixing measure} $\nu \deq \frac{1}{K} \sum_{k=1}^K \delta_{(\mu_k,\Sigma_k)}$ over the Bures--Wasserstein space, see Appendix~\ref{app:interpretation} for details.
This geometric structure was previously introduced in~\citet{CheGeoTan19GMM, DelDes20GMM} and further exploited in~\cite{Lam+22GVI} in the context of variational inference.
We prove that this flow does admit a simple implementation.

\begin{theorem}[Informal]\label{thm:main_gmm}
    Let $\loss$ be a loss function over the space of probability measures.
    Then, the Gaussian mixture gradient flow for $\loss$ is equivalent to the Euclidean gradient flow of the objective $\loss(h_{\bs\mu,\bs\Sigma})$ with respect to the parameters $(\bs\mu,\bs C)$, where $\Sigma_k = C_k C_k^\top$ for each $k\in [K]$.
\end{theorem}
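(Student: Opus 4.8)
The plan is to unfold the Gaussian mixture gradient flow in two stages: first reduce it---viewed as a Wasserstein gradient flow over the mixing measure $\nu$---to a Riemannian gradient flow on a $K$-fold product of Bures--Wasserstein spaces; then transport the flow on each factor to a Euclidean gradient flow through the square-root chart $\Sigma = CC^\top$.

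\textbf{Step 1 (reduction to a product flow).} Recall from Appendix~\ref{app:interpretation} that the Gaussian mixture gradient flow is the Wasserstein gradient flow over $\mc P_2(\mathrm{BW})$, where $\mathrm{BW} \deq \mathrm{BW}(\R^{d+1})$ denotes Bures--Wasserstein space, of the functional $\nu \mapsto \loss(h_{\rho_\nu})$ with $\rho_\nu \deq \int \gamma\,\nu(\D\gamma)$, run from the $K$-atom mixing measure $\nu = \frac{1}{K}\sum_{k}\delta_{(\mu_k,\Sigma_k)}$. I would invoke the analogue, over a Riemannian base, of the finite-width reduction recalled in Section~\ref{sec:mf} (see \citet{AGS}): restricted to $K$-atom measures, this Wasserstein gradient flow agrees, up to a global time rescaling, with the Riemannian gradient flow on $\mathrm{BW}^K$ of $(\bs\mu,\bs\Sigma)\mapsto\loss(h_{\bs\mu,\bs\Sigma})$. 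Since both the objective and the product metric decouple across $k$, it suffices to treat a single factor, i.e.\ to show that the Bures--Wasserstein gradient flow of $g(\mu,\Sigma)\deq\loss(h_{\mc N(\mu,\Sigma)})$ is implemented by the Euclidean gradient flow of $(\mu,C)\mapsto g(\mu,CC^\top)$.

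\textbf{Step 2 (transporting the Bures--Wasserstein flow).} For a single factor I would show that $\phi:(\mu,C)\mapsto\mc N(\mu,CC^\top)$, from $\R^{d+1}\times\mathrm{GL}_{d+1}$ with its Euclidean metric onto $\mathrm{BW}$, is a Riemannian submersion, and then apply the general fact that for a submersion $\phi:N\to M$ and smooth $f:M\to\R$ the $N$-gradient of $f\circ\phi$ is the horizontal lift of the $M$-gradient of $f$; consequently the Euclidean gradient flow of $g\circ\phi$ projects under $\phi$ onto the Bures--Wasserstein gradient flow of $g$. To establish the submersion property: the mean component is carried across by the identity, and for the covariance one computes $\D\phi(\delta C)=\delta C\,C^\top+C\,\delta C^\top$, so that the vertical space $\ker\D\phi$ consists of the infinitesimal rotations $\{C\Omega:\Omega^\top=-\Omega\}$, its Frobenius-orthogonal complement---the horizontal space---is $\{SC:S=S^\top\}$, and on the latter $\D\phi(SC)=S\Sigma+\Sigma S$ with $\|SC\|_{\mathrm F}^2=\mathrm{tr}(S^2\Sigma)$ matching the Bures--Wasserstein squared norm of $S\Sigma+\Sigma S$; this geometry is developed in \citet{CheGeoTan19GMM,DelDes20GMM,Lam+22GVI}. (For a self-contained argument one can instead differentiate directly: along $(\dot\mu,\dot C)=-(\nabla_\mu g,\nabla_C[g\circ\phi])$, the chain rule $\nabla_C[g\circ\phi]=2\,\mathrm{sym}(\nabla_\Sigma g)\,C$ yields $\dot\Sigma=\dot C\,C^\top+C\,\dot C^\top=-2\bigl(\mathrm{sym}(\nabla_\Sigma g)\,\Sigma+\Sigma\,\mathrm{sym}(\nabla_\Sigma g)\bigr)$, which is exactly $-\nabla_{\mathrm{BW}}g|_\Sigma$, while $\dot\mu=-\nabla_\mu g$.) Combined with Step~1, this proves the theorem, the word ``equivalent'' absorbing the global time reparametrization already present in the mean-field picture of Section~\ref{sec:mf}.

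\textbf{Main obstacle.} The substantive step is Step~1: rigorously establishing that the Wasserstein gradient flow over probability measures on the \emph{curved} base $\mathrm{BW}$, restricted to $K$-atom measures, is genuinely the Riemannian gradient flow on $\mathrm{BW}^K$ rather than a merely formal analogue. This requires the first-variation and subdifferential calculus on $\mc P_2(\mathrm{BW})$, together with mild regularity of the objective and nondegeneracy of the $\Sigma_k$ (so that $\phi$ is a submersion in the first place). The transfer in Step~2 is, by contrast, a finite-dimensional Riemannian-geometry computation essentially available in the cited works.
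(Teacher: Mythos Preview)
Your proposal is correct and follows essentially the same two-stage strategy as the paper: first reduce the mixing-measure flow to a product of Bures--Wasserstein flows via the particle interpretation (this is the content of the paper's Theorem~\ref{thm:gmm_flow}, which computes the first variation $\delta_\nu\loss(\rho_\nu)(\xi)=\int\delta\loss(\rho_\nu)\,\D p_\xi$ and invokes \citet[Appendices C and F]{Lam+22GVI}), then transport each factor to the Euclidean flow through $\Sigma=CC^\top$. The paper carries out Step~2 by your parenthetical ``self-contained'' chain-rule computation---directly verifying $\dot\Sigma=\dot C\,C^\top+C\,\dot C^\top$ matches~\eqref{eq:bwgf}---rather than via the Riemannian-submersion argument, and it records the resulting factor of $K$ explicitly; your submersion framing is a clean conceptual alternative that yields the same conclusion.
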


In other words, compared to ordinary neural networks, training GM layers is accomplished by simply incorporating gradient steps for the \emph{square roots} of the covariances.

\section{Implementation}\label{sec:implementation}

In this section, we enhance the flexibility and tractability of GM layers by allowing for vector-valued outputs (needed for multi-class classification), reduced parametrization, and composition.

\subsection{Multi-class classification and vector-valued outputs}

Consider a multi-class classification problem with $L+1$ labels denoted $\{0, \dots, L\}$.  Suppose we are given a dataset ${\{x_i,y_i\}}_{i\in [n]}$, where each $x_i\in\R^d$ and $y_i \in \{0,1,\dotsc,L\}$.
It suffices to describe how to parameterize a vector-valued function $h : \R^d\to\R^{L}$, since we can then apply the logistic loss 
$$
\ms L(h) \deq -\sum_{i=1}^n \Bigl\{\sum_{\ell=1}^{L} h(x_i)_\ell \ind_{y_i = \ell} - \log\bigl(1+\sum_{\ell=1}^{L} \exp(h(x_i)_\ell)\bigr)\Bigr\}\,.
$$

The most straightforward way to parameterize the function $h: \R^d \to \R^{L}$ is via~\eqref{eq:hrho} and~\eqref{eq:gmm}, where now $(\omega,\beta) \in \R^{L}\times\R^d$, i.e., the Gaussian mixture $\rho_{\bs\mu,\bs\Sigma}$ is a distribution over $\R^{d+L}$.
In other words,
\begin{align}\label{eq:hgmm}
    h_{\bs\mu,\bs\Sigma}(x) = K^{-1} \sum_{k\in [K]} \E_{(\omega,\beta)\sim\mc N(\mu_k,\Sigma_k)}[\omega\ReLU(\langle \beta,x\rangle)]\,.
\end{align}
However, the number of parameters becomes  $\Theta((d+L)^2\,K)$, which is prohibitively large.

\subsection{Reducing the number of parameters}\label{subsec:reducing_param}

 To reduce the number of parameters, we propose to incorporate sparsity into the model by considering only diagonal covariance matrices for $\beta$: $\beta\sim\mathcal{N}(\mu^\beta,\diag (\sigma^2))$ where $\diag (\sigma^2)$ is a diagonal matrix whose entries are given by the vector $\sigma^2=\sigma\odot \sigma \in \R^d$. Moreover, for jointly Gaussian $(\omega,\beta)$, the conditional mean of $\omega$ given $\beta$ is affine:  $\E[\omega\mid\beta] = U\beta+v$. Since~\eqref{eq:hgmm} only requires to know the conditional expectation $\E[\omega\mid\beta]$, we model each component of the Gaussian mixture as
\[
\beta\sim\mathcal{N}(\mu^\beta,\diag (\sigma^2))\,,\qquad \E[\omega\mid\beta] = U\beta+v\,.
\]
The trainable parameters for such a component are $\mu^\beta \in \R^d$, $\sigma \in \R^d$, $U \in \R^{L\times d}$, and $v\in\R^{L}$.
Hence, the parameters for the full Gaussian mixture are $\bs\theta \deq (\bs\mu^\beta,\bs\sigma,\bs U, \bs v) = \{(\mu^\beta_k, \sigma_k, U_k, v_k)\}_{k\in [K]}$.
This leads to
\begin{align}\label{eq:hcheaper}
    h_{\bs\theta}(x) = K^{-1} \sum_{k\in [K]} \E_{\beta\sim \mc N(\mu_k^\beta, \diag(\sigma_k^2))}[(U\beta + v)\,\ReLU(\langle \beta,x\rangle)]\,.
\end{align}
This use of~\eqref{eq:hcheaper} cuts down the number of parameters to $\Theta(dKL)$, a substantial savings when $L \ll d$ (as is typical).
An additional benefit is that parametrization by $\bs\sigma$ automatically maintains the positive semidefiniteness of the covariances during training, without recourse to costly projection steps.

To summarize, for multi-class classification, we train the parameters $\bs\theta$ via an optimization algorithm (in our experiments, we use vanilla stochastic gradient descent) on the objective $\ms L(h_{\bs\theta})$, where $\ms L$ is the multi-class logistic loss and $h_{\bs\theta}$ is given in~\eqref{eq:hcheaper}.

\subsection{Stacking GM layers}

In the previous subsection, we showed how to parameterize a function $h : \R^d\to\R^{L}$ as a GM layer.
Since the input and output dimensions are arbitrary, this construction can be readily composed with other types of layers---including GM layers themselves---in order to build up deep neural network architectures. Indeed, as depicted in Figure~\ref{fig:gmlayer}, the GM layer can be dropped in as a replacement for an (infinitely wide) fully connected layer. 

We leave the question of designing and optimizing deep architectures that integrate the GM layer to future research. Instead, we focus on a single GM layer, and call it a \emph{GM network}.

\begin{figure}[ht]
    \centering
\begin{tikzpicture}[
    neuron/.style={ellipse, draw, thick, fill=blue!20, minimum width=1cm, minimum height=1cm, inner sep=1mm, align=center},
    layer/.style={draw, thick, rectangle, fill=red!20, minimum height=1cm, minimum width=1cm, inner sep=1mm, align=center},
    arrow/.style={->,>={Latex[length=3mm,width=2mm]},thick},
    node distance=0.8cm and 1cm, 
    every node/.append style={transform shape},
    auto, scale=0.9
]

\node[neuron] (input1) {$x$};

\node[neuron, below=of input1] (hidden1) {$\cdots$};
\node[neuron, left=of hidden1] (hidden2) {$\ReLU(\beta_1^\top x)$};
\node[neuron, right=of hidden1] (hidden3) {$\ReLU(\beta_m^\top x)$};

\node[neuron, below=of hidden2] (output1) {$h_1(x)$};
\node[neuron, below=of hidden1] (output2) {$\cdots$};
\node[neuron, below=of hidden3] (output3) {$h_{L}(x)$};

\draw[arrow] (input1) -- (hidden1);
\draw[arrow] (input1) -- node[swap] {$\beta_1$} (hidden2);
\draw[arrow] (input1) -- node {$\beta_m$} (hidden3);

\draw[arrow] (hidden2) -- node[swap] {$\omega_{1,1}$} (output1);
\draw[arrow] (hidden3) -- (output1);
\draw[arrow] (hidden1) -- (output1);
\draw[arrow] (hidden2) -- (output3);
\draw[arrow] (hidden3) -- node {$\omega_{L,m}$} (output3);
\draw[arrow] (hidden1) -- (output3);
\draw[arrow] (hidden1) -- (output2);
\draw[arrow] (hidden2) -- (output2);
\draw[arrow] (hidden3) -- (output2);

\node[layer, right=of output3, xshift = 2mm] (output) {$\int \omega_\ell \ReLU(\beta^\top x ) \, \rho(\D \omega,\D\beta)$ \quad $(1\leq \ell \leq L)$};

\node[layer,above=of output, yshift = -1.5mm] (density) {
    \begin{tikzpicture}
        \begin{axis}[
            axis lines=middle,
            xtick=\empty,
            ytick=\empty,
            width = 4cm,
            height = 2.5cm,
            grid=major,
            domain=-10:10,
            samples=100,
            smooth,
            no markers,
            thick
        ]
        \addplot[smooth, no markers, blue!50!black] {
        + (0.5)*exp(-(x-3)^2/(2*2))/sqrt(2*pi*2) 
        + (0.5)*exp(-(x+4)^2/(2*4))/sqrt(2*pi*4) 
        };
        \end{axis}
    \end{tikzpicture}
};

\node[layer, above=of density, yshift = -1mm] (input2) {$x$};
\node[align=center, left= of density, xshift = 0.8cm] {$\rho$};

\draw[arrow] (input2) -- (density);
\draw[arrow] (density) -- (output);

\end{tikzpicture}
\medskip
\caption{A GM layer (right) can act as a replacement for a fully connected layer (left).}\label{fig:gmlayer}
\end{figure}
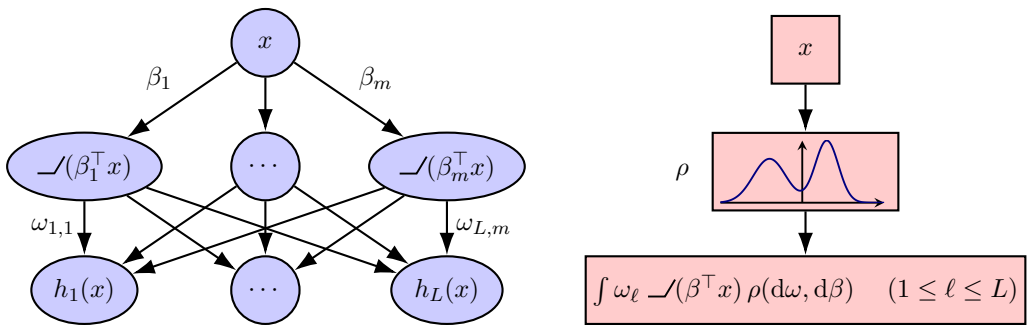

\section{Experiments}\label{sec:exp}

\paragraph{Dataset.} We test the performance of neural networks with GM layers on multi-class classification on two widely used datasets: MNIST \citep{lecun-mnisthandwrittendigit-2010} and Fashion-MNIST \citep{xiao2017fashion}. Both datasets consist of 60,000 training examples and 10,000 test examples, where each example is a $28 \times 28$ grayscale image, associated with a label from one of 10 classes. Each image is vectorized and normalized to have zero mean and unit standard deviation. Throughout this section, test error refers to the misclassification error evaluated over the test set.

\paragraph{Setup.} The number of components $K$ is a hyperparameter: larger $K$ enables more expressive GM layers, while smaller $K$ speeds up computation. We consider $K \in \{5,10,20\}$ for different experiments. For a GM layer with parameters $\mu^{\beta}$, $\sigma$, $U$, and $v$, we initialize the entries of $\mu^{\beta}$, $U$ and $v$ i.i.d.~from $\mathcal{N}(0,\gamma^2)$, and the entries of $\sigma$ all equal to $\gamma$, for some $\gamma>0$. For most of the experiments we fix $\gamma=1/2$ unless otherwise mentioned. We train the network using SGD with batch size $64$ and fixed learning rate $1$ for the parameter $\sigma$ and $0.1$ for all other parameters.

\begin{figure}
\centering
\includegraphics[scale=0.6]{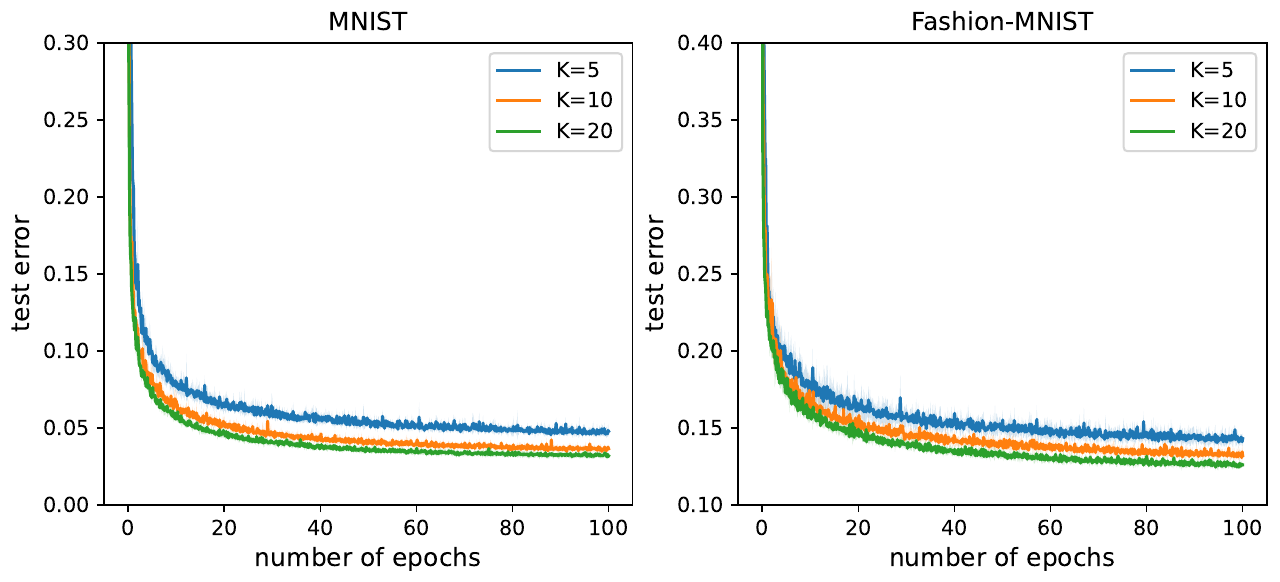} 
\caption{Test error (with error bars) for a GM Network with $K=5,10,20$ number of components vs.~the number of epochs. The left (resp.~right) panel shows the result for MNIST (resp.~Fashion-MNIST) dataset. The error bars are computed over 5 independent trials.}\label{fig:GMM-test-error}
\end{figure}

\begin{figure}[t]
\centering
\includegraphics[scale=0.6]{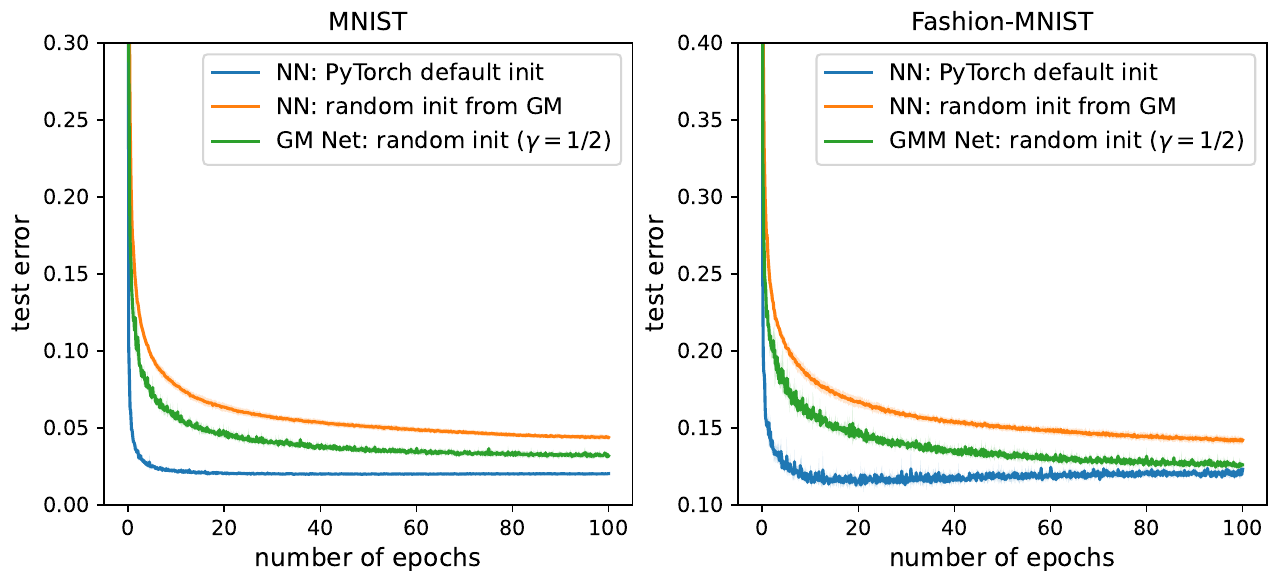} 
\caption{Test error (with error bars) for a 2-layer fully connected network using two different initialization schemes vs.~the number of epochs. The left (resp.~right) panel shows the results for the MNIST (resp.~Fashion-MNIST) dataset. The error bars are computed over 5 independent trials.}\label{fig:NN-test-error}
\end{figure}

\paragraph{Test error.} For both datasets, we test the performance of a network with one GM layer, where the number of components $K$ takes values in $\{5,10,20\}$. The results are presented in Figure~\ref{fig:GMM-test-error}. As we can see, a single GM layer with 20 components achieves a test error of $\approx 2.77\%$ for MNIST, and $\approx 12.13\%$ for Fashion-MNIST. Increasing $K$ leads to better performance, but the marginal improvement is very small after $K$ exceeds $10$, suggesting that in practice one may choose $K$ between 10 and 20 to strike a balance between expressive power and computational efficiency.

We also compare the test error of the GM network with the 2-layer fully connected network. We set the width of the latter network  $m$ to be $1000$ (in fact the test error curve is almost the same for $m\in\{100,500,1000\}$, so we just present the result for $m=1000$ for simplicity). We use two different methods to initialize the parameters of a fully-connected layer: PyTorch's default method (Kaiming uniform \citep{he2015delving}), and random initialization drawn i.i.d from the initial distribution we used in training the GM network with $K=20$ components (for a fair comparison with the GM network). The results are presented in Figure~\ref{fig:NN-test-error}. We can see that although in the end all curves converges to comparable test errors, the number of epochs required to achieve low test are different: the GM network performs better than fully connected network with random initialization, but worse than that with PyTorch's default initialization. This observation calls for future investigation on better initialization schemes for GM layers. 

\begin{figure}[t]
\centering
\includegraphics[scale=0.65]{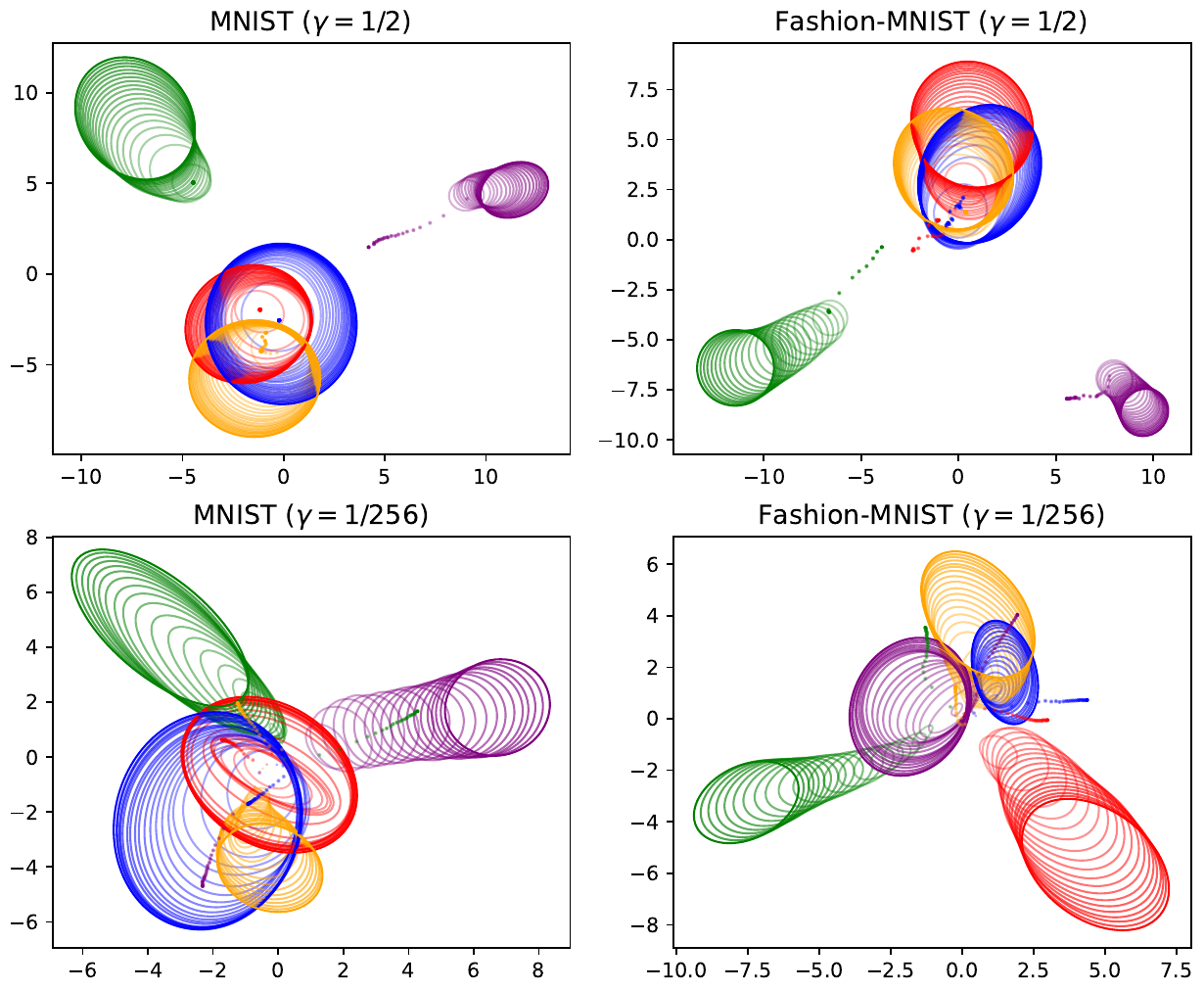} 
\caption{The evolution of the Gaussian components (marginalized over $\beta$) and weights $\beta_j$ of the neurons, projected onto the top 2 PCs of the final GM distribution, across 200 epochs of training. The number of components for the GM net (resp.  number of neurons in the 2-layer neural network) is $K=5$. The projected Gaussian components are represented by their covariance ellipses centered at their means, while the projected weights of the neurons are depicted as dots. We use the same color for the evolution of the same Gaussian component and neuron, with increasing opacity as the number of epochs increases. The left (resp.~right) plots show results for MNIST (resp.~Fashion-MNIST), while the top (resp.~bottom) plots use initialization scale $\gamma=1/2$ (resp.~$\gamma=1/256$).}
\label{fig:evolution}
\end{figure}

\paragraph{Evolution of Gaussian components.} In order to visualize the evolution of the Gaussian mixture $(\rho_t)_{t\geq0}$ during the training phase, we train the network for $T=200$ epochs, compute the top 2 principal subspace of the final Gaussian mixture distribution $\rho_T$ (marginalized over $\beta$), and project the entire training trajectory $(\rho_t)_{0\leq t \leq T}$ (marginalized over $\beta$) onto this 2-dimensional subspace. 
Figure~\ref{fig:evolution} depicts the evolution of this 2-dimensional distribution across 200 epochs for $K=5$, using two types of initialization schemes $\gamma\in\{1/2,1/256\}$. As we can see, the means of the five Gaussian components move far away from their initializations (even when they are initialized near zero, which is the case when $\gamma=1/256$), and the covariance matrices also become non-isotropic quickly. This also shows that the training dynamics of networks with GM layers are not sensitive to initialization. 

We also train a fully-connected 2-layer neural network with width $m=K$ to compare the training dynamics of networks with a GM layer and a fully-connected layer. To make the comparison fair, we initialize the neurons at $\beta_k=\mu_k$ and $\omega_k = U\beta_k + v$ for $1\leq k \leq K$ where $\{\mu_k\}_{1\leq k \leq K}$, $U$ and $v$ are the parameters of the initial Gaussian mixture distribution. We train the neural network using the same SGD algorithm with learning rate $0.1$. The evolution of these neurons (projected onto the same 2-dimensional subspace) across 200 epochs is also shown in Figure~\ref{fig:evolution}. We can see that they exhibit drastically different training dynamics from that of the GM network. For example, when initialized at scale $\gamma=1/2$, the Gaussian components of the GM layer tend to move far away from zero, while the neurons of the fully-connected layer do not exhibit this trend.

\begin{figure}[t]
\centering
\includegraphics[scale=0.65]{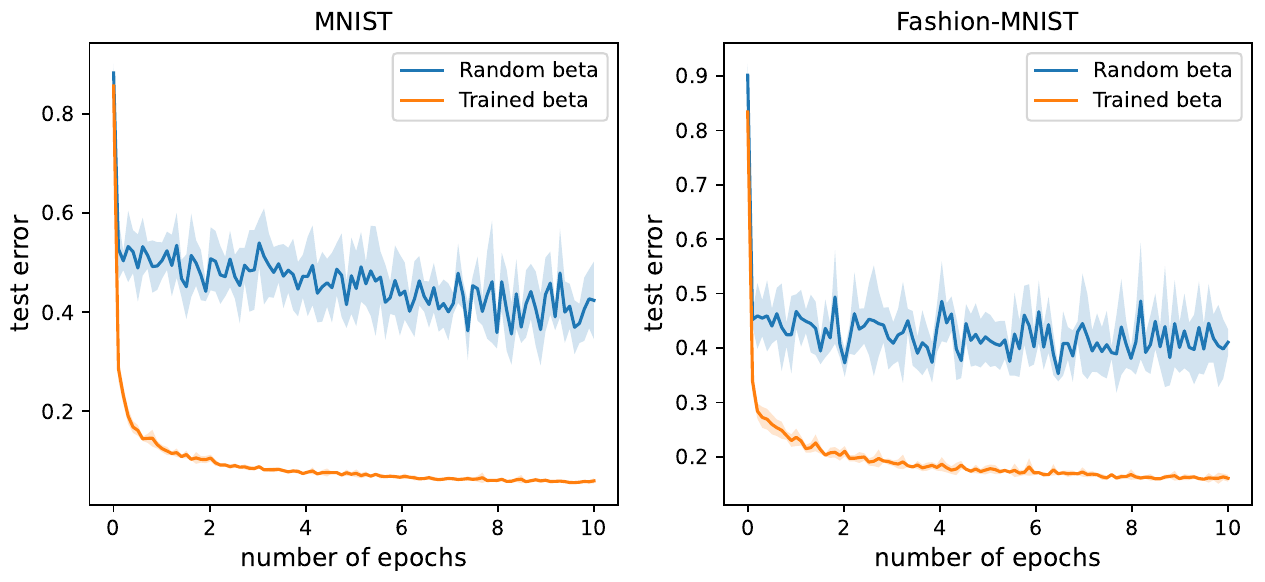} 
\caption{Test error (with error bars) for training with or without updating the marginal distribution over $\beta$ vs.~the number of epochs. The left (resp.~right) panel shows the result for the MNIST (resp.~Fashion-MNIST) dataset. The error bars are computed over 5 independent trials.}
\label{fig:fix_beta}
\end{figure}

\paragraph{Mean field vs.~``NTK'' regime.} We design a simple experiment to inspect whether the training of a network with a single GM layer is in the neural tangent kernel (NTK) a.k.a ``lazy training'' regime~\citep{Jac+18NTK, ChiOyaBac19Lazy, Du+19GDNN, BarMonRak21DeepLearning}, or the mean-field regime.
Indeed, since the presence of feature learning is a powerful motivation for the mean-field regime, it is important to check that this intuition also carries over to GM layers.
Although lazy training is not formally defined for GM layers, we can loosely take it to be the case when the distribution over the ``first layer weights'' $\beta$ does not significantly move away from its initialization.

We fix the marginal distribution over $\beta$ at its initialization (which can be achieved by setting the learning rates for $\mu^{\beta}$ and $\sigma$ to be $0$) and only update $U$ and $v$ for each Gaussian component. Figure~\ref{fig:fix_beta} compares the performance of training the network with fixed $\beta$ vs.\ the network with trained $\beta$. If we fix the marginal distribution over $\beta$, the trained network can only achieve a test error of $\approx 40\%$ for both the MNIST and Fashion-MNIST datasets.
This suggests that ``feature learning'' is indeed crucial for the performance of GM layers.

\begin{figure}[t]
\centering
\includegraphics[scale=0.6]{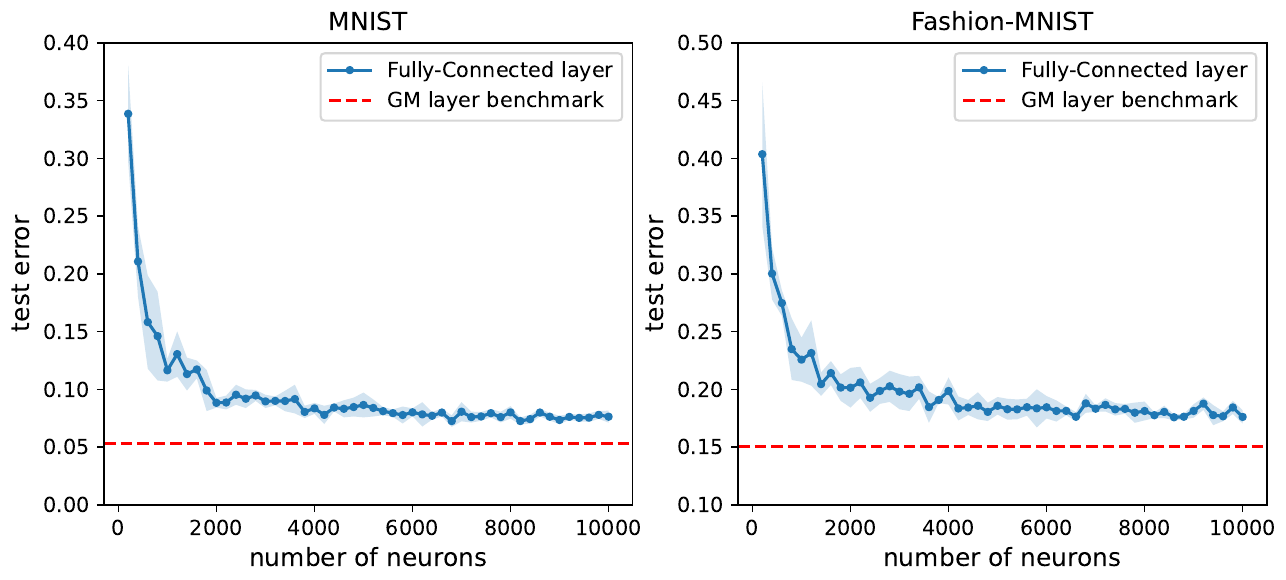} 
\caption{Test error (with error bars) for fully connected 2-layer networks constructed via subsampling vs.~width. The neurons are sampled from a trained GM layer, whose test error is also plotted as a benchmark. The left (resp.~right) panel shows the result for the MNIST (resp.~Fashion-MNIST) dataset. The error bars are computed over 5 independent trials.}\label{fig:subsampling}
\end{figure}

\paragraph{Monte Carlo reduction to fully connected layers.} After training a network with a GM layer, which gives a distribution jointly over $(\omega,\beta)$, one natural question is whether it is possible to construct a fully connected 2-layer network with similar performance by sampling a reasonable number of neurons from this Gaussian mixture distribution. To answer this question, we first train a network with a GM layer with $K=20$ components and $20$ epochs on MNIST. Then we construct fully connected 2-layer networks by sampling $m$ neurons (i.e., $(\omega,\beta)$ pairs) from the trained Gaussian mixture distribution, and evaluate their test errors without training. The results reported in in Figure~\ref{fig:subsampling} are indicating of the classical $1/\sqrt{m}$ convergence rate of Monte Carlo approximation. Unfortunately, this convergence is to slow from the perspective of test error: even for $m=10^4$ there is still a gap between the performance of the GM network and its Monte Carlo approximation. This is a consequence of the high-dimensional nature of the space of parameter: we sample, on average $10^4/20=500$ points per component, and this is not sufficient to estimate accurately a Gaussian integral in dimension $28\times 28=784$.
 
\begin{figure}[t]
\centering
\includegraphics[scale=0.65]{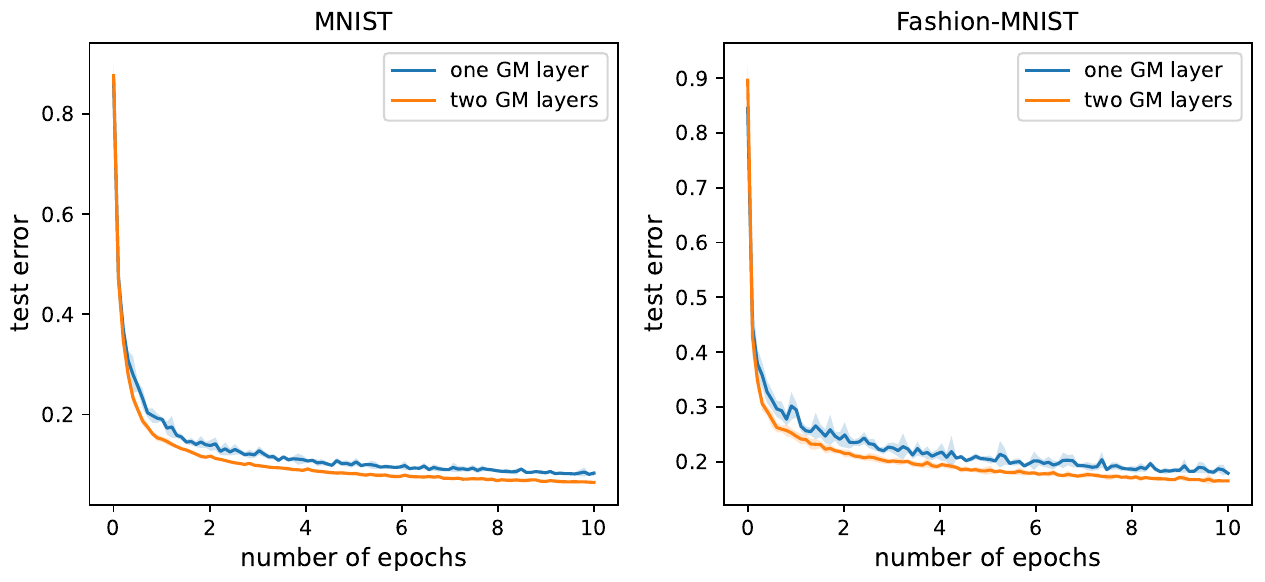} 
\caption{Test error (with error bars) for networks with one or two GM layers vs.~the number of epochs. The left (resp.~right) panel shows the result for the MNIST (resp.~Fashion-MNIST) dataset. The error bars are computed over 5 independent trials.}
\label{fig:two-layers}
\end{figure}

\paragraph{Stacking multiple GM layers.} In the previous experiments, we used networks with a single GM layer. In this experiment, we demonstrate that stacking GM layers can improve classification performance. Specifically, we stack a GM layer with an input dimension of 784 and an output dimension of \( L' = 100 \) with a second GM layer with an input dimension of \( L' = 100 \) and an output dimension of \( L = 9 \), both layers utilizing \( K = 10 \) Gaussian components. This configuration yields a lower test error compared to a single GM layer with the same number of Gaussian components, as shown in Figure~\ref{fig:two-layers}. Our primary objective is to provide a straightforward example illustrating the benefits of stacking multiple GM layers. We have not optimized hyperparameters such as the number of components \( K \) and the width of the middle layer \( m \) for optimal performance. Additionally, constructing a deeper architecture to achieve competitive performance on classic benchmark tasks like image classification would necessitate the integration and optimization of convolutional layers. This would shift the focus to secondary aspects that are beyond the scope of this paper.

\section{Conclusion}

In this paper, we have introduced GM layers as a novel layer type for neural network architectures, offering a fresh perspective that bridges concepts from mean-field theory and variational inference. This  approach opens up a wealth of unexplored possibilities for layer design, suggesting new avenues for incorporating diverse ``variational families'' to model $\rho$ beyond Gaussian mixtures, and for optimizing with respect to various geometries within the variational family.
For instance, while our current model restricts to Gaussian mixtures with \emph{equal weights}, this limitation can be addressed by optimizing over Gaussian mixtures with unequal weights via the Wasserstein--Fisher--Rao geometry~\citep{LieMieSav16HK, Chi+18FR, LieMieSav18HK,LuLuNol19,yan2023learning}, as demonstrated in~\citet[Appendix H]{Lam+22GVI}. Such advancements may also inspire alternatives for other types of layers, including convolutional and attention layers.

A notable limitation of our work is the preliminary nature of our experiments. Further empirical investigation is required to refine our design choices and compare them with existing architectures. For example, the sparse parametrization in Subsection~\ref{subsec:reducing_param} is quite drastic, suggesting the need to explore alternatives like low-rank factorization of the covariance matrix. Moreover, developing effective initialization and training strategies for GM layers remains an open question. We anticipate that addressing these challenges will pave the way for more robust and versatile neural network architectures in the future.

\subsection*{Acknowledgments}

SC was supported by the Eric and Wendy Schmidt Fund at the Institute for
Advanced Study.
PR was supported by NSF grants DMS-2022448 and CCF-2106377\@.
YY was supported in part by a Norbert Wiener Postdoctoral Fellowship from MIT.

\newpage

\bibliography{bibfileWGF}

\newpage
\appendix

\section{Interpretation as a Wasserstein gradient flow}\label{app:interpretation}

The gradient flows that we consider in this paper are closely related to Wasserstein gradient flows, as we describe in detail here.

\paragraph*{Bures--Wasserstein gradient flows.}
We first consider the case $K = 1$, so that $\rho = \mc N(\mu,\Sigma)$ is simply a Gaussian measure.
The space of non-degenerate Gaussian measures over $\R^D$, which is naturally identified with $\R^d\times \mb S_{++}^d$, can be equipped with the Wasserstein metric and is then known as the \emph{Bures--Wasserstein space}~\citep{BhaJaiLim19BW}. We denote this space by $\BW(\R^D)$. The Riemannian structure of the Wasserstein space endows $\BW(\R^D)$ with a Riemannian metric, called the \emph{Bures--Wasserstein metric}.

Given a functional $\loss$ over the Wasserstein space, we can restrict it to a functional $L : \BW(\R^D) \to \R$ via $L(\mu,\Sigma) \deq \loss(\mc N(\mu,\Sigma))$.
If $\nabla_\mu$, $\nabla_\Sigma$ denote the usual Euclidean gradients of $L$ w.r.t.\ $\mu$ and $\Sigma$ respectively, it is known that the gradient flow of $L$ over $\BW(\R^D)$ is given by
\begin{align}\label{eq:bwgf}
    \boxed{\dot \mu = -\nabla_\mu L(\mu,\Sigma)\,,} \qquad\text{and}\qquad \boxed{\dot\Sigma = -2\,\bigl(\Sigma\,\nabla_\Sigma L(\mu,\Sigma) + \nabla_\Sigma L(\mu,\Sigma)\,\Sigma\bigr)\,.}
\end{align}
See, e.g.,~\citet[Appendix A]{altschuler2021averaging} or~\citet[Appendix B.3]{Lam+22GVI}.
On the other hand, if we parameterize $\Sigma = UU^\top$ where $U \in \R^{D\times D}$ and we follow the Euclidean gradient flow of $(\mu,U) \mapsto L(\mu,UU^\top)$, it is straightforward to see that it
coincides with~\eqref{eq:bwgf}.
This parametrization also has the advantage of maintaining the positive semidefiniteness of $\Sigma$ along the optimization without the need for projections, which is convenient for implementation and can also be used to enforce low-rank factorizations~\citep{BurMon03LowRank}.
In Appendix~\ref{app:derivations}, we derive the Bures--Wasserstein gradient flows for our problems of interest, keeping in mind that they can also be implemented as Euclidean gradient flows via the parametrization $\Sigma = UU^\top$.

\paragraph{Gaussian mixture gradient flows.}
For $K > 1$, the Wasserstein geometry over the space of Gaussian mixtures is not explicit, and we instead follow the geometry defined in~\citet{CheGeoTan19GMM, DelDes20GMM}.
This geometry can be interpreted as the Wasserstein geometry over the curved manifold $(\BW(\R^D), W_2)$, see~\citet[Appendix F]{Lam+22GVI} for details.

\section{Exact derivations}\label{app:derivations}

In this section, we record exact expressions for the Gaussian mixture gradient flows.
These expressions were used to validate the correctness of our GM layer implementations in \texttt{PyTorch} and could be useful for future investigations.

We consider the following two problems.
\begin{itemize}
\item \textbf{Regression.} We are given a dataset ${\{(x_{i},y_{i})\}}_{i \in [n]}$ where each
$x_{i}\in\mathbb{R}^{d}$ and $y_{i}\in\mathbb{R}$. We consider
the squared loss
\[
\ms L(h) = \frac{1}{n} \sum_{i=1}^n {\bigl(y_i - h(x_i)\bigr)}^2\,.
\]
\item \textbf{Multi-class classification.} We are given a dataset ${\{(x_{i},y_{i})\}}_{i\in [n]}$,
where each $x_{i}\in\mathbb{R}^{d}$ and $y_{i}\in\{0,1,\ldots,L\}$ and $L$ denotes the number of classes.
For this problem, we consider the multi-class logistic loss
\[ \ms L(h) = -\frac{1}{n}\sum_{i=1}^{n}\biggl\{ \sum_{\ell=1}^{L} {h(x_i)}_\ell \ind_{y_i=\ell} -\log\Bigl(1+\sum_{\ell=1}^{L}\exp\bigl({h(x_i)}_\ell\bigr)\Bigr)\biggr\}\,,
\]
where $h : \R^d\to\R^{L}$.
\end{itemize}

For regression, we derive exact expressions for the gradient flows for the case of $K=1$ (i.e., the Gaussian mixture is simply a Gaussian) and for the case $K > 1$ for the full parametrization~\eqref{eq:hgmm}. For multi-class classification, we focus on the case $K > 1$ with the diagonal parametrization~\eqref{eq:hcheaper}.

\paragraph{Notation.}
For $\theta = (\omega,\beta)$, we use the shorthand notation $f(x,\theta) \deq \omega \ReLU(\langle \beta,x\rangle)$.
We write $\phi(\cdot \mid \mu,\Sigma)$ for the density of $\mc N(\mu,\Sigma)$.
We also use $\loss$ to denote a loss over the space of probability measures (whereas $\ms L$ denotes a loss over the space of functions).

\subsection{Regression}

\subsubsection{Gaussians}\label{app:regression_gaussian}

We first restrict $\rho$ to be a single Gaussian, namely
\[
\R^D \ni
\theta=\begin{bmatrix}
\omega\\
\beta
\end{bmatrix}\sim\rho = \mathcal{N}(\mu,\Sigma) = \mathcal{N}\Bigl(\begin{bmatrix}
\mu^\omega\\
\mu^{\beta}
\end{bmatrix},\,\begin{bmatrix}
\Sigma^{\omega} & \Sigma^{\omega,\beta}\\
\Sigma^{\beta,\omega} & \Sigma^{\beta}
\end{bmatrix}\Bigr)\,,
\]
where $D = d+1$.
Then we can write the objective function as
\begin{equation}
\min_{\rho=\mathcal{N}(\mu,\Sigma)}\loss(\rho)\coloneqq\frac{1}{n}\sum_{i=1}^{n} {\Bigl(y_i - \int f(x_{i},\theta)\,\rho(\D\theta)\Bigr)^{2}}\,.\label{eq:loss-regression-gaussian}
\end{equation}
The Bures--Wasserstein gradient flow for minimizing $\loss$ is
characterized by the following theorem.

\begin{theorem}\label{thm:regression-gaussian}
    The Bures--Wasserstein
gradient flow $(\rho_{t})_{t\geq0}$ for minimizing $\loss$ in (\ref{eq:loss-regression-gaussian})
is given by $\rho_{t}=\mathcal{N}(\mu_{t},\Sigma_{t})$ that evolves
according to the ODE system
\begin{align*}
\boxed{\begin{aligned}
\dot{\mu_{t}} & =\frac{2}{n}\sum_{i=1}^{n}\bigl(y_{i}-\mathbb{E}_{\rho_{t}} f(x_{i},\theta)\bigr)\,\E_{\rho_{t}}\nabla_{\theta}f(x_{i},\theta)\,,\\
\dot{\Sigma}_{t} & =\frac{2}{n}\sum_{i=1}^{n}\bigl(y_{i}-\mathbb{E}_{\rho_{t}}f(x_{i},\theta)\bigr)\,\E_{\rho_{t}}[\nabla_{\theta}f(x_{i},\theta)\otimes(\theta-\mu_{t})+(\theta-\mu_{t})\otimes\nabla_{\theta}f(x_{i},\theta)]\,.
\end{aligned}}
\end{align*}
\end{theorem}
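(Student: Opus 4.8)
The plan is to differentiate through the Gaussian expectation and then substitute into the Bures--Wasserstein gradient flow formula~\eqref{eq:bwgf}. Write $F_i(\mu,\Sigma) \deq \E_{\theta\sim\mathcal N(\mu,\Sigma)} f(x_i,\theta)$, so that $L(\mu,\Sigma)\deq\loss(\mathcal N(\mu,\Sigma)) = \frac1n\sum_{i=1}^n (y_i - F_i(\mu,\Sigma))^2$ and, by the chain rule, $\nabla_\mu L = -\frac2n\sum_i (y_i-F_i)\,\nabla_\mu F_i$ and $\nabla_\Sigma L = -\frac2n\sum_i (y_i-F_i)\,\nabla_\Sigma F_i$, where the gradient in $\Sigma$ is taken over symmetric matrices (which is all that $L$ sees). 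It then suffices to compute $\nabla_\mu F_i$ and $\nabla_\Sigma F_i$ and plug into $\dot\mu = -\nabla_\mu L$ and $\dot\Sigma = -2(\Sigma\nabla_\Sigma L + \nabla_\Sigma L\,\Sigma)$.

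First I would record the standard Gaussian differentiation identities: for a sufficiently regular $g:\R^D\to\R$ and $\theta\sim\mathcal N(\mu,\Sigma)$, one has $\nabla_\mu\,\E g(\theta) = \E\nabla_\theta g(\theta)$ and $\nabla_\Sigma\,\E g(\theta) = \tfrac12\,\E\nabla_\theta^2 g(\theta)$, which follow by differentiating under the integral sign using $\nabla_\mu\phi(\cdot\mid\mu,\Sigma) = \Sigma^{-1}(\theta-\mu)\,\phi(\cdot\mid\mu,\Sigma)$ and $\nabla_\Sigma\phi(\cdot\mid\mu,\Sigma) = \tfrac12\,(\Sigma^{-1}(\theta-\mu)(\theta-\mu)^\top\Sigma^{-1} - \Sigma^{-1})\,\phi(\cdot\mid\mu,\Sigma)$, together with Gaussian integration by parts. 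Applying the first identity with $g = f(x_i,\cdot)$ gives $\nabla_\mu F_i = \E_\rho\nabla_\theta f(x_i,\theta)$, so that $\dot\mu = -\nabla_\mu L$ is exactly the first displayed ODE. For the covariance, $\nabla_\Sigma F_i = \tfrac12\,\E_\rho\nabla_\theta^2 f(x_i,\theta)$ gives $\dot\Sigma = \frac2n\sum_i(y_i-F_i)\,(\Sigma\,\E_\rho\nabla_\theta^2 f_i + \E_\rho\nabla_\theta^2 f_i\,\Sigma)$; the last step is the Gaussian integration-by-parts identity $\E_\rho[(\theta-\mu)\otimes\nabla_\theta g] = \Sigma\,\E_\rho\nabla_\theta^2 g$ (and its transpose $\E_\rho\nabla_\theta^2 g\,\Sigma = \E_\rho[\nabla_\theta g\otimes(\theta-\mu)]$, using symmetry of the Hessian), which turns the right-hand side into exactly $\frac2n\sum_i(y_i-F_i)\,\E_\rho[\nabla_\theta f_i\otimes(\theta-\mu) + (\theta-\mu)\otimes\nabla_\theta f_i]$, as claimed.

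The one genuine subtlety --- and the step I expect to require the most care --- is that $f(x,\theta) = \omega\,\ReLU(\langle\beta,x\rangle)$ is not twice differentiable in $\theta$, so $\nabla_\Sigma\E g = \tfrac12\E\nabla^2 g$ and $\nabla_\Sigma F_i = \tfrac12\E\nabla^2 f_i$ are not literally meaningful. I would handle this by first replacing $\ReLU$ with a smooth mollification $\ReLU_\varepsilon$, for which all of the above is justified since $f$ is then smooth with at most quadratic growth and with $\nabla_\theta f$, $\nabla_\theta^2 f$ of at most linear (resp.\ bounded) growth, so that differentiation under the Gaussian integral and the integration-by-parts identities are valid; then passing to the limit $\varepsilon\to0$. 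The final expressions involve only $f$ and $\nabla_\theta f$ --- the latter equal to $(\ReLU(\langle\beta,x\rangle),\ \omega\,\ind\{\langle\beta,x\rangle>0\}\,x)$ for a.e.\ $\theta$ --- and the relevant expectations converge by dominated convergence using uniform polynomial bounds on $\ReLU_\varepsilon$ and $\ReLU_\varepsilon'$. Everything else is bookkeeping: expanding the chain rule, respecting the symmetric-matrix gradient convention in~\eqref{eq:bwgf}, and the two Gaussian integration-by-parts identities.
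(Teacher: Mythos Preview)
Your proposal is correct and follows the same overall strategy as the paper: compute the Euclidean gradients $\nabla_\mu L$ and $\nabla_\Sigma L$ via the Gaussian identities $\nabla_\mu\phi=-\nabla_\theta\phi$ and $\nabla_\Sigma\phi=\tfrac12\nabla_\theta^2\phi$ together with integration by parts, then substitute into~\eqref{eq:bwgf}. There is one execution difference worth noting in the covariance computation. You pass through $\nabla_\Sigma F_i=\tfrac12\,\E\nabla_\theta^2 f_i$ and then invoke Stein's identity $\Sigma\,\E\nabla^2 g=\E[(\theta-\mu)\otimes\nabla g]$ to return to first derivatives; this forces you through the (nonexistent) Hessian of the ReLU term, which you then repair by mollification. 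The paper instead integrates by parts only \emph{once} --- from $\int f\,\nabla_\theta^2\phi$ to $-\int\nabla_\theta f\otimes\nabla_\theta\phi$ --- and then substitutes $\nabla_\theta\phi=-\Sigma^{-1}(\theta-\mu)\,\phi$ directly. That route never needs $\nabla_\theta^2 f$ and so sidesteps the mollification detour entirely (the paper simply remarks that $\nabla_\theta f$ is defined a.e.). Both routes land on the same formula; the paper's is a little more direct. (The paper also expands the squared loss into a linear-plus-bilinear form in $\rho$ rather than applying the outer chain rule to $(y_i-F_i)^2$, but that difference is purely cosmetic.)
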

\begin{proof}
The loss function $\loss$ for the regression problem~\eqref{eq:loss-regression-gaussian} can be written as
\begin{align*}
\loss(\rho)
& =\frac{1}{n}\sum_{i=1}^{n}{\Bigl(y_i - \int f(x_{i},\theta)\,\rho(\D \theta)\Bigr)^{2}} \\
&=\frac{1}{n}\sum_{i=1}^{n}y_{i}^{2}-\frac{2}{n}\sum_{i=1}^{n}y_{i}\int f(x_{i},\theta)\,\rho(\D\theta)+\frac{1}{n}\sum_{i=1}^{n} {\Bigl(\int f(x_{i},\theta)\,\rho(\D\theta)\Bigr)^{2}}\\
 & =\ell_{0}+2\int V(\theta)\,\rho(\D \theta)+\iint U(\theta,\theta')\,\rho(\D\theta)\,\rho(\D\theta')\,,
\end{align*}
where $\ell_{0}=n^{-1}\sum_{i=1}^{n}y_{i}^{2}$ is some constant that
does not depend on $\rho$, and the two functions $V:\mathbb{R}^{D}\to\mathbb{R}$
and $U:\mathbb{R}^{D}\times\mathbb{R}^{D}\to\mathbb{R}$ are given
by
\begin{align}\label{eq:defUV}
    V(\theta)\coloneqq-\frac{1}{n}\sum_{i=1}^{n}y_{i}f(x_{i},\theta)\qquad\text{and}\qquad U(\theta,\theta')\deq \frac{1}{n}\sum_{i=1}^{n}f(x_{i},\theta)\,f(x_{i},\theta')\,.
\end{align}

By~\eqref{eq:bwgf}, it suffices to compute the Euclidean gradients w.r.t.\ the variables $\mu$ and $\Sigma$. We first compute $\nabla_{\mu}\eu L(\rho)$
as follows:
\begin{align}
\nabla_{\mu}\eu L(\rho)
& =2\,\nabla_{\mu}\int V(\theta)\,\phi(\theta\mid \mu,\Sigma)\,\mathrm{d}\theta+\nabla_{\mu}\iint U(\theta,\theta')\,\phi(\theta\mid\mu,\Sigma)\,\phi(\theta'\mid\mu,\Sigma)\,\mathrm{d}\theta\,\mathrm{d}\theta'\nonumber \\
 & \overset{\text{(i)}}{=}2\int V(\theta)\,\nabla_{\mu}\phi(\theta\mid\mu,\Sigma)\,\mathrm{d}\theta+\iint U(\theta,\theta')\,\nabla_{\mu}[\phi(\theta \mid \mu,\Sigma)\,\phi(\theta' \mid \mu,\Sigma)]\,\mathrm{d}\theta\,\mathrm{d}\theta'\nonumber \\
 & \overset{\text{(ii)}}{=}2\int V(\theta)\,\nabla_{\mu}\phi(\theta \mid \mu,\Sigma)\,\mathrm{d}\theta+2\int\Bigl[\int U(\theta,\theta')\,\nabla_{\mu}\phi(\theta \mid \mu,\Sigma)\,\mathrm{d}\theta\Bigr]\,\phi(\theta' \mid \mu,\Sigma)\,\mathrm{d}\theta'\nonumber \\
 & \overset{\text{(iii)}}{=}-2\int V(\theta)\,\nabla_{\theta}\phi(\theta \mid \mu,\Sigma)\,\mathrm{d}\theta-2\int\Bigl[\int U(\theta,\theta')\,\nabla_{\theta}\phi(\theta \mid \mu,\Sigma)\,\mathrm{d}\theta\Bigr]\,\phi(\theta' \mid \mu,\Sigma)\,\mathrm{d}\theta'\nonumber \\
 & \overset{\text{(iv)}}{=}2\int\nabla_{\theta}V(\theta)\,\phi(\theta \mid \mu,\Sigma)\,\mathrm{d}\theta+2\int\Bigl[\int\nabla_{\theta}U(\theta,\theta')\,\phi(\theta \mid \mu,\Sigma)\,\mathrm{d}\theta\Bigr]\,\phi(\theta' \mid \mu,\Sigma)\,\mathrm{d}\theta'\nonumber \\
 &= 2\int\Bigl[\nabla_{\theta}V(\theta)+\int\nabla_{\theta}U(\theta,\theta')\,\phi(\theta' \mid \mu,\Sigma)\,\mathrm{d}\theta'\Bigr]\,\phi(\theta \mid \mu,\Sigma)\,\mathrm{d}\theta\label{eq:regression-grad-mu}
\end{align}
Here, step (i) uses the Leibniz integral rule (since $U$ and $V$
are continuous and $\phi$ is sufficiently smooth); step (ii) follows
from the chain rule and the fact that $U(\theta,\theta')=U(\theta',\theta)$;
step (iii) follows from~\eqref{eq:gaussian-eq-1} in Appendix~\ref{sec:auxiliary};
and step (iv) follows from integration by parts. Following similar
steps, we can compute $\nabla_{\Sigma}\loss(\rho)$ as follows: 
\begin{align*}
\nabla_{\Sigma}\loss(\rho) & =2\int V(\theta)\,\nabla_{\Sigma}\phi(\theta \mid \mu,\Sigma)\,\mathrm{d}\theta+2\int\Bigl[\int U(\theta,\theta')\,\nabla_{\Sigma}\phi(\theta \mid \mu,\Sigma)\,\mathrm{d}\theta\Bigr]\,\phi(\theta' \mid \mu,\Sigma)\,\mathrm{d}\theta'\\
 & \overset{\text{(a)}}{=}\int V(\theta)\,\nabla_{\theta}^{2}\phi(\theta \mid \mu,\Sigma)\,\mathrm{d}\theta+\int\Bigl[\int U(\theta,\theta')\,\nabla_{\theta}^{2}\,\phi(\theta \mid \mu,\Sigma)\,\mathrm{d}\theta\Bigr]\,\phi(\theta' \mid \mu,\Sigma)\,\mathrm{d}\theta'\\
 & \overset{\text{(b)}}{=}-\int\nabla_{\theta}V(\theta)\otimes\nabla_{\theta}\phi(\theta \mid \mu,\Sigma)\,\mathrm{d}\theta \\
 &\qquad{} -\int\Bigl[\int\nabla_{\theta}U(\theta,\theta')\otimes\nabla_{\theta}\phi(\theta \mid \mu,\Sigma)\,\mathrm{d}\theta\Bigr]\,\phi(\theta' \mid \mu,\Sigma)\,\mathrm{d}\theta'\\
 & \overset{\text{(c)}}{=}\int\nabla_{\theta}V(\theta)\otimes(\theta-\mu)\,\phi(\theta;\mu,\Sigma)\,\mathrm{d}\theta\,\Sigma^{-1}\\
 &\qquad{} +\int \Bigl[\int\nabla_{\theta}U(\theta,\theta')\otimes(\theta-\mu)\,\phi(\theta;\mu,\Sigma)\,\mathrm{d}\theta\Bigr]\,\phi(\theta' \mid \mu,\Sigma)\,\mathrm{d}\theta'\,\Sigma^{-1}\\
 & \overset{\text{(b')}}{=}-\int\nabla_{\theta}\phi(\theta \mid \mu,\Sigma)\otimes\nabla_{\theta}V(\theta)\,\mathrm{d}\theta\\
 &\qquad{}-\int\Bigl[\int\nabla_{\theta}\phi(\theta \mid \mu,\Sigma)\otimes\nabla_{\theta}U(\theta,\theta')\,\mathrm{d}\theta\Bigr]\,\phi(\theta' \mid \mu,\Sigma)\,\mathrm{d}\theta'\\
 & \overset{\text{(c')}}{=}\Sigma^{-1}\int(\theta-\mu)\otimes\nabla_{\theta}V(\theta)\,\phi(\theta;\mu,\Sigma)\,\mathrm{d}\theta \\
 &\qquad{} +\Sigma^{-1}\int \Bigl[\int(\theta-\mu)\otimes\nabla_{\theta}U(\theta,\theta')\,\phi(\theta;\mu,\Sigma)\,\mathrm{d}\theta\Bigr]\,\phi(\theta' \mid \mu,\Sigma)\,\mathrm{d}\theta'
\end{align*}
Here, step (a) follows from~\eqref{eq:gaussian-eq-2} in Appendix~\ref{sec:auxiliary};
steps (b) and (b') both follow from applying integration by parts
to the formula following step (a); whereas steps (c) and (c') both
follow from~\eqref{eq:gaussian-eq-0} in Appendix~\ref{sec:auxiliary}.
According to~\eqref{eq:bwgf}, the Bures--Wasserstein
gradient flow for minimizing $\loss$ is given by $\rho_{t}=\mathcal{N}(\mu_{t},\Sigma_{t})$, $t\ge 0$,
where 
\begin{align*}
\dot{\mu}_{t} & =-\nabla_{\mu}\loss(\rho_{t}),\\
\dot{\Sigma}_{t} & =-2\,\nabla_{\Sigma}\loss(\rho_{t})\,\Sigma_{t}-2\,\Sigma_{t}\,\nabla_{\Sigma}\loss(\rho_{t})\,.
\end{align*}
In order to derive a closed-form expression, we compute
\begin{align}
\nabla_{\theta}V(\theta)
& =-\frac{1}{n}\sum_{i=1}^{n}y_{i}\,\nabla_{\theta}f(x_{i},\theta)=-\frac{1}{n}\sum_{i=1}^{n}y_{i}\begin{bmatrix}
\ReLU(\beta^\top x_{i})\\
\omega\ReLU'(\beta^\top x_{i})\,x_{i}
\end{bmatrix}\,,\qquad\text{and}\label{eq:gradV}\\
\nabla_{\theta}U(\theta,\theta') & =\frac{1}{n}\sum_{i=1}^{n}\nabla_{\theta}f(x_{i},\theta)\,f(x_{i},\theta')=\frac{1}{n}\sum_{i=1}^{n}f(x_{i},\theta')\,\begin{bmatrix}
\ReLU(\beta^\top x_{i})\\
\omega\ReLU'(\beta^\top x_{i})\,x_{i}\label{eq:gradU}
\end{bmatrix}\,.
\end{align}
where $\ReLU'(x)=\ind\{x\geq 0\}$ is the derivative of ReLU at any $x\neq 0$.\footnote{In general, the Wasserstein gradient at a measure $\mu$ is an element of $L^2(\mu)$ and therefore is only defined almost everywhere; see~\citet[Chapter 8]{AGS} for details.}
Hence, we arrive at
\begin{align*}
\dot{\mu_{t}} & =-\nabla_{\mu}\loss(\rho_{t})
=-2\int\Bigl[\nabla_{\theta}V(\theta)+\int\nabla_{\theta}U(\theta,\theta')\,\phi(\theta' \mid \mu_{t},\Sigma_{t})\,\mathrm{d}\theta'\Bigr]\,\phi(\theta \mid \mu_{t},\Sigma)\,\mathrm{d}\theta\\
 & =\frac{2}{n}\sum_{i=1}^{n} \bigl(y_{i}-\mathbb{E}_{\rho_{t}}f(x_{i},\theta)\bigr)\,\mathbb{E}_{\rho_{t}}\nabla_{\theta}f(x_{i},\theta)
\end{align*}
and
\begin{align*}
\dot{\Sigma}_{t} & =-2\,\nabla_{\Sigma}\loss(\rho_{t})\,\Sigma_{t}-2\,\Sigma_{t}\,\nabla_{\Sigma}\loss(\rho_{t})\\
 & =-2\int\nabla_{\theta}V(\theta)\otimes(\theta-\mu_{t})\,\phi(\theta\mid \mu_{t},\Sigma_{t})\,\mathrm{d}\theta \\
 &\qquad{} -2\int \Bigl[\int\nabla_{\theta}U(\theta,\theta')\otimes(\theta-\mu_{t})\,\phi(\theta\mid \mu_{t},\Sigma_{t})\,\mathrm{d}\theta\Bigr]\,\phi(\theta' \mid \mu_{t},\Sigma_{t})\,\mathrm{d}\theta'\\
 & \qquad{} -2\int(\theta-\mu_{t})\otimes\nabla_{\theta}V(\theta)\,\phi(\theta\mid \mu_{t},\Sigma_{t})\,\mathrm{d}\theta\\
 &\qquad{} -2\int\Bigl[\int(\theta-\mu_{t})\otimes\nabla_{\theta}U(\theta,\theta')\,\phi(\theta\mid \mu_{t},\Sigma_{t})\,\mathrm{d}\theta\Bigr]\,\phi(\theta' \mid \mu_{t},\Sigma_{t})\,\mathrm{d}\theta'\\
 & =\frac{2}{n}\sum_{i=1}^{n}\bigl(y_{i}-\mathbb{E}_{\rho_{t}} f(x_{i},\theta)\bigr)\,\bigl(\mathbb{E}_{\rho_{t}}[\nabla_{\theta}f(x_{i},\theta)\otimes(\theta-\mu_{t})]+\mathbb{E}_{\rho_{t}}[(\theta-\mu_t)\otimes\nabla_{\theta}f(x_{i},\theta)]\bigr)\,.
\end{align*}
This completes the derivation.
\end{proof}

Although Theorem~\ref{thm:regression-gaussian} derives equations for the Bures--Wasserstein gradient flow, the expressions involve expectations which must also be computed.
We now proceed to show that these expectations can be computed in closed form for ReLU activations, which allows for exact implementation in software.
The following derivations are tedious, but the resulting equations are readily programmed.

We need to compute 
\begin{align*}
\mathbb{E}_{\rho} f(x_{i},\theta) & =\mathbb{E}_{\rho}[\omega\ReLU(\beta^\top x_{i})]\,,\qquad\mathbb{E}_{\rho} \nabla_{\theta}f(x_{i},\theta)=\mathbb{E}_{\rho}\begin{bmatrix}
\ReLU(\beta^\top x_{i})\\
\omega\ReLU'(\beta^\top x_{i})\,x_{i}
\end{bmatrix}\,,\qquad\text{and}\\
\mathbb{E}_{\rho}[\nabla_{\theta}f(x_{i},\theta)\otimes\theta] & =\mathbb{E}_{\rho}\begin{bmatrix}
\omega \ReLU(\beta^\top x_{i}) & \ReLU(\beta^\top x_{i})\,\beta^{\top}\\
\omega^{2}\ReLU'(\beta^\top x_{i})\,x_{i} & \omega\ReLU'(\beta^\top x_{i})\,x_{i}\otimes\beta
\end{bmatrix}\,.
\end{align*}
Basically, we need to compute, for each $1\leq i\leq n$,
\begin{align*}
    A_{i}&=\mathbb{E}_{\rho} \ReLU(\beta^\top x_{i})\,, &B_{i}&=\mathbb{E}_{\rho}[\omega\ReLU'(\beta^\top x_{i})]\,, \\ C_{i}&=\mathbb{E}_{\rho}[\omega \ReLU(\beta^\top x_{i})]\,, &D_{i}&=\mathbb{E}_{\rho}[\omega^{2}\ReLU'(\beta^\top x_{i})]\,,
\end{align*}
and for each $1\leq j\leq d$, 
\[
P_{i,j}=\mathbb{E}_{\rho}[\ReLU(\beta^\top x_i)\,\beta_{j}]\,,\qquad Q_{i,j}=\mathbb{E}_{\rho}[\omega\ReLU'(\beta^\top x_i)\,\beta_{j}]\,.
\]
Then we can express 
\[
\mathbb{E}_{\rho} \nabla_{\theta}f(x_{i},\theta)=\begin{bmatrix}
A_{i}\\
B_{i}x_{i}
\end{bmatrix}\,,\qquad\mathbb{E}_{\rho} f(x_{i},\theta)=C_{i}\,, \]
and
\begin{align*}
\mathbb{E}_{\rho}[\nabla_{\theta}f(x_{i},\theta)\otimes(\theta-\mu)]=\begin{bmatrix}
C_{i} & [P_{i,j}]_{1\leq j\leq d}\\[0.25em]
D_{i}x_{i} & x_{i}\otimes[Q_{i,j}]_{1\leq j\leq d}
\end{bmatrix}-\begin{bmatrix}
A_{i}\\
B_{i}x_{i}
\end{bmatrix}\otimes\mu\,.
\end{align*}
Therefore, the update rule looks like
\[
\mu_{t+1}=\mu_{t}+\eta g_{t}\qquad\text{and}\qquad\Sigma_{t+1}=\Sigma_{t}+\eta G_{t}
\]
where
\[
g_{t}=\frac{2}{n}\sum_{i=1}^{n}(y_{i}-C_{i})\begin{bmatrix}
A_{i}\\
B_{i}x_{i}
\end{bmatrix}
\]
and
\begin{align*}
G_{t} & =\frac{2}{n}\sum_{i=1}^{n}(y_{i}-C_{i})\,\Bigl\{ \begin{bmatrix}
C_{i} & [P_{i,j}]_{1\leq j\leq d}\\[0.25em]
D_{i}x_{i} & x_{i}\otimes[Q_{i,j}]_{1\leq j\leq d}
\end{bmatrix}-\begin{bmatrix}
A_{i}\\
B_{i}x_{i}
\end{bmatrix} \otimes\mu\Bigr\} \\
 & \qquad{} +\frac{2}{n}\sum_{i=1}^{n}(y_{i}-C_{i})\,\Bigl\{ \begin{bmatrix}
C_{i} & D_{i}x_{i}^{\top}\\
[P_{i,j}]_{1\leq j\leq d} & [Q_{i,j}]_{1\leq j\leq d}\otimes x_{i}
\end{bmatrix}-\mu\otimes \begin{bmatrix}
A_{i}\\
B_{i}x_{i}
\end{bmatrix}\Bigr\}\,.
\end{align*}
In addition, the objective function
\[
\loss(\rho_{t})=\frac{1}{n}\sum_{i=1}^{n}(y_{i} - C_i)^{2}\,.
\]

Let us first compute a universal rule. Let $X=\omega$ , $Y=\beta^\top x_i$
and $Z=\beta_{j}$. We have
\begin{align}\label{eq:defXYZ}
\begin{bmatrix}
X\\
Y\\
Z
\end{bmatrix}\sim\mathcal{N}\Biggl(\begin{bmatrix}
\mu_{1}\\
\mu_{2}\\
\mu_{3}
\end{bmatrix},\,\begin{bmatrix}
\sigma_{1}^{2} & \rho_{1,2}\sigma_{1}\sigma_{2} & \rho_{1,3}\sigma_{1}\sigma_{3}\\
\rho_{1,2}\sigma_{1}\sigma_{2} & \sigma_{2}^{2} & \rho_{2,3}\sigma_{2}\sigma_{3}\\
\rho_{1,3}\sigma_{1}\sigma_{3} & \rho_{2,3}\sigma_{2}\sigma_{3} & \sigma_{3}^{2}
\end{bmatrix}\Biggr)\,,
\end{align}
where 
\[
\begin{bmatrix}
\mu_{1}\\
\mu_{2} \\
\mu_{3}
\end{bmatrix}=\begin{bmatrix}
\mu^{\omega}\\[0.25em]
x_{i}^\top \mu^\beta\\[0.25em]
e_{j}^{\top}\mu^{\beta}
\end{bmatrix}\,,\qquad \begin{bmatrix}
\sigma_{1}^{2} & \rho_{1,2}\sigma_{1}\sigma_{2} & \rho_{1,3}\sigma_{1}\sigma_{3}\\
\rho_{1,2}\sigma_{1}\sigma_{2} & \sigma_{2}^{2} & \rho_{2,3}\sigma_{2}\sigma_{3}\\
\rho_{1,3}\sigma_{1}\sigma_{3} & \rho_{2,3}\sigma_{2}\sigma_{3} & \sigma_{3}^{2}
\end{bmatrix}= \begin{bmatrix}
(\sigma^{\omega})^{2} & \Sigma^{\omega,\beta}x_{i} & \Sigma^{\omega,\beta}e_{j}\\[0.25em]
x_{i}^{\top}\Sigma^{\beta,\omega} & x_{i}^{\top}\Sigma^{\beta}x_{i} & x_{i}^{\top}\Sigma^{\beta}e_{j}\\[0.25em]
e_{j}^{\top}\Sigma^{\beta,\omega} & e_{j}^{\top}\Sigma^{\beta}x_{i} & e_{j}^{\top}\Sigma^{\beta}e_{j}
\end{bmatrix}\,.
\]
We know that 
\begin{align*}
\mathsf{cov}(X-\alpha Y,\,Y) & =\rho_{1,2}\sigma_{1}\sigma_{2}-\alpha\sigma_{2}^{2}=0\qquad\text{when}\qquad\alpha=\rho_{1,2}\,\frac{\sigma_{1}}{\sigma_{2}}\,,\qquad\text{and}\\
\mathsf{cov}\bigl(Z-\beta\,(X-\alpha Y)-\gamma Y,\,Y\bigr) & =\rho_{2\,3}\sigma_{2}\sigma_{3}-\gamma\sigma_{2}^{2}=0\qquad\text{when}\qquad\gamma=\rho_{2,3}\,\frac{\sigma_{3}}{\sigma_{2}}\,,\qquad\text{and}
\end{align*}
\begin{align*}
\mathsf{cov}\bigl(Z-\beta\,(X-\alpha Y)-\gamma Y,\,X-\alpha Y\bigr) & =\mathsf{cov}(Z,X-\alpha Y)-\beta\var(X-\alpha Y)=0
\end{align*}
when
\begin{align*}
\beta=\frac{\rho_{1,3}\sigma_{1}\sigma_{3}-\alpha\rho_{2,3}\sigma_{2}\sigma_{3}}{\sigma_{1}^{2}-2\alpha\rho_{1,2}\sigma_{1}\sigma_{2}+\alpha^{2}\sigma_{2}^{2}}\,.
\end{align*}
Therefore $X-\alpha Y$, $Y$, and $Z-\beta\,(X-\alpha Y)-\gamma Y$
are mutually independent. We first compute $A_{i}$ and $B_{i}$.
By direct computation,
\begin{align*}
A_{i} & =\mathbb{E} \max\{ Y,0\} =\mu_{2}+\sigma_{2}\,\mathbb{E}\max\bigl\{ \frac{Y-\mu_{2}}{\sigma_{2}},-\frac{\mu_{2}}{\sigma_{2}}\bigr\} \\
&=\mu_{2}+\sigma_{2}\int_{-\mu_{2}/\sigma_{2}}^{\infty}\frac{x}{\sqrt{2\pi}}\,e^{-x^{2}/2}\,\mathrm{d}x-\mu_{2}\,\Phi\bigl(-\frac{\mu_{2}}{\sigma_{2}}\bigr)\\
 & =\mu_{2}-\sigma_{2}\int_{-\mu_{2}/\sigma_{2}}^{\infty}\frac{1}{\sqrt{2\pi}}\,\mathrm{d}e^{-x^{2}/2}-\mu_{2}\,\Phi\bigl(-\frac{\mu_{2}}{\sigma_{2}}\bigr)
 =\mu_{2}+\sigma_{2}\,\phi\bigl(-\frac{\mu_{2}}{\sigma_{2}}\bigr)-\mu_{2}\,\Phi\bigl(-\frac{\mu_{2}}{\sigma_{2}}\bigr)
\end{align*}
and
\begin{align*}
B_{i} & =\mathbb{E}[X\ind_{Y>0}]=\mathbb{E}[(X-\alpha Y)\ind_{Y>0}]+\mathbb{E}[\alpha Y\ind_{Y>0}]\\
 & =\mathbb{E}[X-\alpha Y]\,\mathbb{P}(Y>0)+\alpha\,\mathbb{E}\max\{ Y,0\} =E_{i}F_{i}+\alpha A_{i}\,,
\end{align*}
where we further define
\begin{align*}
E_{i} & \coloneqq\mathbb{E}[X-\alpha Y]=\mu_{1}-\alpha\mu_{2}\qquad\text{and}\qquad F_{i}\coloneqq\mathbb{P}(Y>0)=1-\Phi\bigl(-\frac{\mu_{2}}{\sigma_{2}}\bigr)\,.
\end{align*}
We then compute
\begin{align*}
C_{i} & =\mathbb{E}[X\max\{ Y,0\}]
=\mathbb{E}[(X-\alpha Y)\max\{ Y,0\} ]+\mathbb{E}[\alpha Y\max\{ Y,0\} ]\\
 & =\mathbb{E}[X-\alpha Y]\,\mathbb{E}[\max\{ Y,0\} ]+\alpha\,\mathbb{E}[Y\max\{ Y,0\} ]
= A_i E_{i}+\alpha G_{i}
\end{align*}
where we define, for $Y\sim\mathcal{N}(\mu_{2},\sigma_{2}^{2})$,
\begin{align*}
G_{i}
& \coloneqq\mathbb{E}[Y\max\{ Y,0\} ]=\mathbb{E}[Y^{2}\ind_{Y>0}]
=\int_{0}^{\infty}\frac{x^{2}}{\sqrt{2\pi}\sigma_{2}}\,e^{-(x-\mu_{2})^{2}/(2\sigma_{2}^{2})}\,\mathrm{d}x\\
 & =\int_{-\mu_{2}/\sigma_{2}}^{\infty}\frac{(\mu_{2}+\sigma_{2}y)^{2}}{\sqrt{2\pi}}\,e^{-y^{2}/2}\,\mathrm{d}y\qquad\text{by change of variable}\qquad y=\frac{x-\mu_{2}}{\sigma_{2}}\\
 & =\mu_{2}^{2}\int_{-\mu_{2}/\sigma_{2}}^{\infty}\frac{1}{\sqrt{2\pi}}\,e^{-y^{2}/2}\,\mathrm{d}y+2\mu_{2}\sigma_{2}\int_{-\mu_{2}/\sigma_{2}}^{\infty}\frac{y}{\sqrt{2\pi}}\,e^{-y^{2}/2}\,\mathrm{d}y+\int_{-\mu_{2}/\sigma_{2}}^{\infty}\frac{\sigma_{2}^{2}y^{2}}{\sqrt{2\pi}}\,e^{-y^{2}/2}\,\mathrm{d}y\\
 & =\mu_{2}^{2}\,[1-\Phi(-\mu_{2}/\sigma_{2})]+2\mu_{2}\sigma_{2}\,\phi(-\mu_{2}/\sigma_{2})\\
 &\qquad{} -\sigma_{2}^{2}\,\frac{y}{\sqrt{2\pi}}\,e^{-y^{2}/2}\Big|_{-\mu_{2}/\sigma_{2}}^{\infty}+\sigma_{2}^{2}\int_{-\mu_{2}/\sigma_{2}}^{\infty}\frac{1}{\sqrt{2\pi}}\,e^{-y^{2}/2}\,\mathrm{d}y\\
 & =(\mu_{2}^{2}+\sigma_{2}^{2})\,[1-\Phi(-\mu_{2}/\sigma_{2})]+\mu_{2}\sigma_{2}\,\phi(-\mu_{2}/\sigma_{2})\,.
\end{align*}
We also need to compute
\begin{align*}
D_{i} & =\mathbb{E}[X^{2}\ind_{Y>0}]=\mathbb{E}[(X-\alpha Y)^{2}\ind_{Y>0}]+\alpha^{2}\,\mathbb{E}[Y^{2}\ind_{Y>0}]+2\alpha\,\mathbb{E}[(X-\alpha Y)Y\ind_{Y>0}]\\
 &= F_i H_{i}+\alpha^{2}G_{i}+2\alpha A_i E_{i}
\end{align*}
where
\[
H_{i}\coloneqq\mathbb{E}[(X-\alpha Y)^{2}]=\sigma_{1}^{2}-2\alpha\rho_{1,2}\sigma_{1}\sigma_{2}+\alpha^{2}\sigma_{2}^{2}+(\mu_{1}-\alpha\mu_{2})^{2}\,.
\]
Then, we compute
\begin{align*}
P_{i,j} & =\mathbb{E}[Z\max\{ Y,0\}]
=\mathbb{E}[(Z-\gamma Y)\max\{ Y,0\} ]+\mathbb{E}[\gamma Y\max\{ Y,0\} ]\\
 & =\mathbb{E}[Z-\gamma Y]\,\mathbb{E}[\max\{ Y,0\} ]+\gamma\,\mathbb{E}[Y\max\{ Y,0\} ]\\
 & = A_i M_{i,j}+\gamma G_{i}\,,
\end{align*}
where we let 
\[
M_{i,j}\coloneqq\mathbb{E}[Z-\gamma Y]=\mu_{3}-\gamma\mu_{2}\,.
\]
Finally, we compute
\begin{align*}
Q_{i,j} & =\mathbb{E}[X\ind_{Y>0}Z]=\mathbb{E}[(X-\alpha Y+\alpha Y)\ind_{Y>0}\,(Z-\beta\,(X-\alpha Y)-\gamma Y+\beta\,(X-\alpha Y)+\gamma Y)]\\
 & =\mathbb{E}[(X-\alpha Y)\ind_{Y>0}\,(Z-\beta\,(X-\alpha Y)-\gamma Y)]+\mathbb{E}[\alpha Y\ind_{Y>0}\,(Z-\beta\,(X-\alpha Y)-\gamma Y)]\\
 & \qquad+\mathbb{E}[(X-\alpha Y)\ind_{Y>0}\,\beta\,(X-\alpha Y)]+\mathbb{E}[\alpha Y\ind_{Y>0}\,\beta\,(X-\alpha Y)]\\
 & \qquad+\mathbb{E}[(X-\alpha Y)\ind_{Y>0}\,\gamma Y]+\mathbb{E}[\alpha Y\ind_{Y>0}\,\gamma Y]\\
 & =E_{i}F_{i}N_{i,j}+\alpha A_{i}N_{i,j}+\beta F_{i} H_i +\alpha\beta A_{i}E_{i}+\gamma A_i E_{i}+\alpha\gamma G_{i}\,,
\end{align*}
where we define
\[
N_{i,j}\coloneqq\mathbb{E}[Z-\beta\,(X-\alpha Y)-\gamma Y]=\mu_{3}-\beta\mu_{1}+\alpha\beta\mu_{2}-\gamma\mu_{2}\,.
\]
We have provided explicit expressions for all of the terms.

\subsubsection{Gaussian mixtures}\label{app:gmm}

Consider a $K$-component Gaussian mixture distribution $\rho_{\nu}$
parameterized by $\nu$:
\[
\rho_{\nu}=\frac{1}{K}\sum_{k=1}^{K}\mathcal{N}(\mu^{(k)},\Sigma^{(k)})\,,\qquad\text{where}\qquad\nu=\frac{1}{K}\sum_{k=1}^{K}\delta_{(\mu^{(k)},\Sigma^{(k)})}\,.
\]
Here $\nu$ is a discrete probability measure over $\mathbb{R}^{d}\times\mb{S}_{++}^{d}$.
We start by deriving the Gaussian mixture gradient flow for a general loss $\loss$.

\begin{theorem}\label{thm:gmm_flow}
    Let $\loss$ be a functional over the Wasserstein space.
    The Gaussian mixture gradient
    flow $(\nu_{t})_{t\geq0}$ for minimizing $\loss$ initialized at a
    distribution $\nu_{0}=K^{-1}\sum_{k=1}^{K}\delta_{(\mu_{0}^{(k)},\Sigma_{0}^{(k)})}$
    with $K$ atoms is given by $\nu_{t}= K^{-1} \sum_{k=1}^{K}\delta_{(\mu_t^{(k)},\Sigma_t^{(k)})}$, $t\ge 0$,
    where for each $k\in[K]$, the dynamics of $(\mu_{t}^{(k)})_{t\geq0}$
    and $(\Sigma_{t}^{(k)})_{t\geq0}$ are governed by the ODE system
    \begin{align*}
    \boxed{\begin{aligned}
    \dot{\mu}_{t}^{(k)} &= - \E_{\mc N(\mu_t^{(k)},\Sigma_t^{(k)})}\nabla \delta \loss (\rho_{\nu_t})\,, \\
    \dot\Sigma_t^{(k)}
    &= -\Sigma_t^{(k)}\,\E_{\mc N(\mu_t^{(k)},\Sigma_t^{(k)})}\nabla^2 \delta \loss(\rho_{\nu_t}) - \E_{\mc N(\mu_t^{(k)},\Sigma_t^{(k)})}\nabla^2 \delta \loss(\rho_{\nu_t}) \,\Sigma_t^{(k)}\,.
    \end{aligned}}
    \end{align*}
    Here, $\delta\loss = \delta_\rho \loss$ refers to the first variation of $\loss$~\citep[see][Chapter 7]{San15OT}.
\end{theorem}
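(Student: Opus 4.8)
The plan is to reduce the Gaussian mixture gradient flow to one Bures--Wasserstein gradient flow per atom of the mixing measure, and then substitute into the explicit formula~\eqref{eq:bwgf}. The starting observation is that the correspondence $\nu\mapsto\rho_\nu$, given by $\rho_\nu(\cdot) = \int\mc N(m,S)(\cdot)\,\nu(\D m,\D S)$, is \emph{linear} in $\nu$. Hence by the chain rule for first variations, the first variation of the composite functional $\nu\mapsto\loss(\rho_\nu)$ over $\mc P_2(\BW(\R^D))$ is obtained by integrating $\delta\loss(\rho_\nu)$ against the density of the relevant Gaussian:
\[
    \delta_\nu[\loss(\rho_\nu)](m,S) = \int \delta\loss(\rho_\nu)(\theta)\,\phi(\theta\mid m,S)\,\D\theta = \E_{\mc N(m,S)}[\delta\loss(\rho_\nu)]\,,
\]
which is a well-defined function on the Bures--Wasserstein manifold $\BW(\R^D)\cong\R^D\times\mb S_{++}^D$. (The identity follows from Fubini: a tangent perturbation $\chi$ of $\nu$ induces the perturbation $\rho_\chi = \int\mc N(m,S)\,\chi(\D m,\D S)$ of $\rho_\nu$, and pairing $\delta\loss(\rho_\nu)$ with $\rho_\chi$ returns $\int\E_{\mc N(m,S)}[\delta\loss(\rho_\nu)]\,\chi(\D m,\D S)$.)

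Next, recall from Appendix~\ref{app:interpretation} that the Gaussian mixture gradient flow is \emph{by definition} the Wasserstein gradient flow of $\nu\mapsto\loss(\rho_\nu)$ over the space $\mc P_2$ of probability measures on the curved base manifold $(\BW(\R^D),W_2)$. Just as in the flat setting, a Wasserstein gradient flow initialized at an atomic measure with $K$ atoms remains atomic with $K$ atoms, each atom evolving along the negative Riemannian gradient of the first variation evaluated at the current measure $\nu_t$. Since the Riemannian gradient on the base is the Bures--Wasserstein gradient, the atom $(\mu_t^{(k)},\Sigma_t^{(k)})$ follows the Bures--Wasserstein gradient flow of the frozen objective $(m,S)\mapsto\E_{\mc N(m,S)}[\delta\loss(\rho_{\nu_t})]$, and applying~\eqref{eq:bwgf} to this objective then only requires its Euclidean gradients with respect to $m$ and $S$.

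Those two gradients are computed by Gaussian integration by parts, exactly as in the proof of Theorem~\ref{thm:regression-gaussian}. From $\nabla_m\phi(\cdot\mid m,S) = -\nabla_\theta\phi(\cdot\mid m,S)$ (equation~\eqref{eq:gaussian-eq-1}) and one integration by parts, $\nabla_m\E_{\mc N(m,S)}[g] = \E_{\mc N(m,S)}[\nabla g]$; from $\nabla_S\phi(\cdot\mid m,S) = \tfrac12\,\nabla_\theta^2\phi(\cdot\mid m,S)$ (equation~\eqref{eq:gaussian-eq-2}) and two integrations by parts, $\nabla_S\E_{\mc N(m,S)}[g] = \tfrac12\,\E_{\mc N(m,S)}[\nabla^2 g]$. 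Taking $g = \delta\loss(\rho_{\nu_t})$ and inserting these into~\eqref{eq:bwgf} gives $\dot\mu_t^{(k)} = -\E_{\mc N(\mu_t^{(k)},\Sigma_t^{(k)})}[\nabla\delta\loss(\rho_{\nu_t})]$ and $\dot\Sigma_t^{(k)} = -2\bigl(\Sigma_t^{(k)}\cdot\tfrac12\E[\nabla^2\delta\loss(\rho_{\nu_t})] + \tfrac12\E[\nabla^2\delta\loss(\rho_{\nu_t})]\cdot\Sigma_t^{(k)}\bigr)$, where the prefactor $2$ cancels the $\tfrac12$, yielding the boxed system.

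I expect the main obstacle to be making the reduction in the second paragraph rigorous: that the Wasserstein gradient flow over $\mc P_2(\BW(\R^D))$ started at a $K$-atom measure stays a $K$-atom measure whose atoms follow the Bures--Wasserstein gradient flow of the (time-varying) first variation. On flat Euclidean space this is the standard particle description of Wasserstein gradient flows, but here the base is the curved Bures--Wasserstein manifold, so one must appeal to the corresponding theory developed for this geometry (as in the references cited for the Gaussian-mixture construction, \citet{CheGeoTan19GMM, DelDes20GMM, Lam+22GVI}). The remaining ingredients are elementary: the chain rule for first variations and Gaussian calculus. One should also assume enough regularity of $\loss$ that $\delta\loss(\rho)$ exists and is twice differentiable, so that the integrations by parts are licit; this holds for the concrete losses treated later in this appendix.
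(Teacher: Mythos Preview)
Your proposal is correct and follows essentially the same route as the paper: compute the first variation of $\nu\mapsto\loss(\rho_\nu)$ via linearity of $\nu\mapsto\rho_\nu$, invoke the particle interpretation of the Wasserstein gradient flow over $\BW(\R^D)$ so that each atom follows the Bures--Wasserstein gradient of the first variation, and then evaluate the Euclidean $\mu$- and $\Sigma$-gradients using the Gaussian identities~\eqref{eq:gaussian-eq-1}--\eqref{eq:gaussian-eq-2} before substituting into~\eqref{eq:bwgf}. The paper likewise defers the curved-base particle interpretation to~\citet[Appendices C and F]{Lam+22GVI}, so your caveat about that step matches the level of rigor in the original.
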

\begin{proof}
We refer to~\citet[Appendix F]{Lam+22GVI} for background.
We calculate the first variation of $\nu \mapsto \loss(\rho_\nu)$ in terms of the first variation of $\loss$: note that for any $\delta>0$
and any measure $\mathcal{X}$ satisfying $\int_{\mathsf{BW}(\mathbb{R}^{d})}\mathrm{d}\mathcal{X}=0$ and such that $\nu+\delta\mc X$ for sufficiently small $\delta > 0$, if $\xi = (\mu,\Sigma)$ and $p_\xi$ denotes $\mc N(\mu,\Sigma)$,
we have
\begin{align*}
    \loss(\rho_{\nu+\delta\mc X}) - \loss(\rho_\nu)
    &= \int \delta\loss(\rho_\nu)\,\D (\rho_{\nu+\delta\mc X}-\rho_\nu) + o(\delta)
    = \int \Bigl[\int \delta \loss(\rho_\nu)\, \D p_\xi\Bigr] \,\delta \mc X(\D \xi) + o(\delta)\,,
\end{align*}
which, by the definition of the first variation, shows that
\[
\delta_\nu\loss(\rho_\nu):\xi\mapsto \int \delta\loss(\rho_\nu)\,\D p_\xi\,.
\]
Since the Gaussian mixture gradient flow is, by definition, a Wasserstein gradient flow over the Bures--Wasserstein space, the particle interpretation of Wasserstein gradient flows shows that each $(\mu_t^{(k)},\Sigma_t^{(k)})$ evolves by the Bures--Wasserstein gradient of $\delta_\nu \loss(\rho_{\nu_t})$.
It follows from~\citet[Appendix C]{Lam+22GVI} that the Gaussian mixture flow takes the claimed form.

Alternatively, we can derive the gradient flow more explicitly.
Noting that $\xi=(\mu,\Sigma)$, we have
\begin{align*}
\nabla_{\mu}\delta_\nu\loss(\rho_\nu)(\xi)
&= \int \delta \loss(\rho_\nu) \, \nabla_\mu \phi(\theta \mid \mu,\Sigma) \, \D \theta
= -\int \delta \loss(\rho_\nu) \, \nabla_\theta \phi(\theta \mid \mu,\Sigma) \, \D \theta \\
&= \int \nabla_\theta \delta \loss(\rho_\nu) \, \phi(\theta \mid \mu,\Sigma)\,\D \theta
\end{align*}
and
\begin{align*}
\nabla_{\Sigma}\delta_\nu\loss(\rho_\nu)(\xi)
&= \int \delta \loss(\rho_\nu) \, \nabla_\Sigma \phi(\theta \mid \mu,\Sigma) \, \D \theta
= \frac{1}{2}\int \delta \loss(\rho_\nu) \, \nabla^2_\theta \phi(\theta \mid \mu,\Sigma) \, \D \theta \\
&= \frac{1}{2} \int \nabla_\theta^2 \delta \loss(\rho_\nu) \, \phi(\theta \mid \mu,\Sigma)\,\D \theta\,,
\end{align*}
where we used the expressions in Appendix~\ref{sec:auxiliary}.
Recalling~\eqref{eq:bwgf}, it completes the derivation.
\end{proof}

With Theorem~\ref{thm:gmm_flow} in hand, we can now prove Theorem~\ref{thm:main_gmm}.

\begin{proof}[Proof of Theorem~\ref{thm:main_gmm}]
Consider the loss function 
\[
\ell(\bs\mu,\bs C) \equiv \ell(\mu_1,\ldots,\mu_K,C_1,\ldots,C_K) \coloneqq \mathcal{L}(\rho_\nu)
\]
where
\[
\nu = \frac{1}{K} \sum_{k=1}^{K}\delta_{(\mu_k,\Sigma_k)}\quad \text{with}\quad \Sigma_k = C_k C_k^\top\,.
\]
Note that for any $\Delta \in \mathbb{R}^d$ and any $\varepsilon>0$, we have
\[
    \lim_{\varepsilon\to 0} \frac{\ell(\mu_1 +\varepsilon\Delta, \mu_2,\ldots,\mu_K,C_1,\ldots,C_K) - \ell(\mu_1,\ldots,\mu_K,C_1,\ldots,C_K)}{\varepsilon} = \langle \nabla_{\mu_1} \ell(\bs\mu,\bs C), \Delta \rangle\,.
\]
The left-hand side of the above equation can also be expressed as
\[
\lim_{\varepsilon\to 0} \frac{\mathcal{L}(\rho_{\nu_\varepsilon}) - \mathcal{L}(\rho_{\nu})}{\varepsilon} \qquad \text{where} \qquad \nu_\varepsilon \coloneqq \frac{1}{K}\, \delta_{(\mu_1+\varepsilon \Delta,\Sigma_1)}+ \frac{1}{K} \sum_{k=2}^{K}\delta_{(\mu_k,\Sigma_k)}\,.
\]
By the definition of first variation, we know that
\begin{align*}
\mathcal{L}(\rho_{\nu_\varepsilon}) - \mathcal{L}(\rho_{\nu}) 
&= \int \delta\loss(\rho_\nu)\,\D (\rho_{\nu_\varepsilon}-\rho_\nu) + o(\varepsilon) \\
&=\int \Bigl[\int \delta \loss(\rho_\nu)\, \D p_\xi\Bigr] \,\varepsilon \mc X(\D \xi) + o(\varepsilon) \qquad \text{where}\qquad 
\mathcal{X}= \nu_\varepsilon - \nu \\
&= \frac{1}{K}\, \Big[ \int \delta \loss(\rho_\nu)\, \D p_{(\mu_1+\varepsilon\Delta,\Sigma_1)} - \int \delta \loss(\rho_\nu)\, \D p_{(\mu_1,\Sigma_1)} \Big] + o(\varepsilon)\,.
\end{align*}
Taking the above three relations collectively yields
\[
\langle \nabla_{\mu_1} \ell(\bs\mu,\bs C), \Delta \rangle = \frac{1}{K}\, \Big\langle\nabla_{\mu_1} \int \delta \loss(\rho_\nu)\, \D p_{(\mu_1,\Sigma_1)}, \Delta \Big\rangle\,.
\]
Since the above equation holds for any $\Delta\in\mathbb{R}^d$, we know that
\[
\nabla_{\mu_1} \ell(\bs\mu,\bs C) = \frac{1}{K}\, \nabla_{\mu_1} \int \delta \loss(\rho_\nu)\, \D p_{(\mu_1,\Sigma_1)} = \frac{1}{K}\, \E_{\mc N(\mu_1,\Sigma_1)}\nabla \delta \loss (\rho_{\nu_t})\,,
\]
where we used the expressions in Appendix~\ref{sec:auxiliary} and integration by parts. Therefore the Euclidean gradient flow w.r.t.~$\mu^{(k)}$ is given by
\begin{equation} \label{eq:Euclidean-GF-mu}
\dot{\mu}^{(k)}_{t} = -\frac{1}{K}\, \E_{\mc N(\mu^{(k)}_t,\Sigma^{(k)}_t)}\nabla \delta \loss (\rho_{\nu_t})\,.
\end{equation}
Similarly, we can show that
\[
\nabla_{\Sigma_1} \ell(\bs\mu,\bs C) = \frac{1}{K}\, \nabla_{\Sigma_1} \int \delta \loss(\rho_\nu)\, \D p_{(\mu_1,\Sigma_1)} = \frac{1}{2K}\, \E_{\mc N(\mu_1,\Sigma_1)}\nabla^2 \delta \loss (\rho_{\nu_t})\,,
\]
which then leads to
\[
\nabla_{C_1} \ell(\bs\mu,\bs C) = 2\,\nabla_{\Sigma_1} \ell(\bs\mu,\bs C)\, C_1 = \frac{1}{K}\, \E_{\mc N(\mu_1,\Sigma_1)}\nabla^2 \delta \loss (\rho_{\nu_t}) \,C_1\,.
\]
Hence the Euclidean gradient flow w.r.t.~$C^{(k)}$ is given by
\[
\dot{C}^{(k)}_{t} = -\frac{1}{K}\, \E_{\mc N(\mu^{(k)}_t,\Sigma^{(k)}_t)} \nabla^2 \delta \loss (\rho_{\nu_t}) \,C^{(k)}_t\,,
\]
therefore $\Sigma^{(k)}_t = C^{(k)}_t C^{(k)\top}_t$ satisfies
\begin{align} 
\dot{\Sigma}^{(k)}_{t} &= C^{(k)}_t \dot{C}^{(k)\top}_{t} + \dot{C}^{(k)}_{t} C^{(k)\top}_t \nonumber\\
&= -\frac{1}{K}\, \big[\E_{\mc N(\mu^{(k)}_t,\Sigma^{(k)}_t)} \nabla^2 \delta \loss (\rho_{\nu_t}) \big] \,\Sigma^{(k)}_t -\frac{1}{K} \,\Sigma^{(k)}_t\,\big[\E_{\mc N(\mu^{(k)}_t,\Sigma^{(k)}_t)} \nabla^2 \delta \loss (\rho_{\nu_t}) \big]\,.\label{eq:Euclidean-GF-Sigma}
\end{align}
By comparing \eqref{eq:Euclidean-GF-mu} and \eqref{eq:Euclidean-GF-Sigma} with the ODE system in Theorem~\ref{thm:gmm_flow}, we can see that they are equivalent up to a scaling factor of $K$.
\end{proof}

Then, we specialize to the objective function 
\[
\min_{\nu}\loss(\rho_{\nu})\coloneqq\frac{1}{n}\sum_{i=1}^{n} {\Bigl(y_i - \int f(x_{i},\theta)\,\rho_{\nu}(\D \theta)\Bigr)^{2}}\,.
\]

\begin{theorem} \label{thm:regression-gaussian-gmm}
The Gaussian mixture gradient
flow $(\nu_{t})_{t\geq0}$ for minimizing $\loss$ initialized at a
distribution $\nu_{0}=K^{-1}\sum_{k=1}^{K}\delta_{(\mu_{0}^{(k)},\Sigma_{0}^{(k)})}$
with $K$ atoms is given by $\nu_{t}= K^{-1} \sum_{k=1}^{K}\delta_{(\mu_t^{(k)},\Sigma_t^{(k)})}$, $t\ge 0$,
where for each $k\in[K]$, the dynamics of $(\mu_{t}^{(k)})_{t\geq0}$
and $(\Sigma_{t}^{(k)})_{t\geq0}$ are governed by the ODE system
\begin{align*}
\boxed{\begin{aligned}
\dot{\mu}_{t}^{(k)} & =\frac{2}{n}\sum_{i=1}^{n}\bigl(y_{i}-\mathbb{E}_{\rho_{\nu_{t}}} f(x_{i},\theta)\bigr)\,\mathbb{E}_{\mathcal{N}(\mu_{t}^{(k)},\Sigma_{t}^{(k)})} \nabla_{\theta}f(x_{i},\theta)\,,\\
\dot{\Sigma}_{t}^{(k)} & =\frac{2}{n}\sum_{i=1}^{n}\bigl(y_{i}-\mathbb{E}_{\rho_{\nu_{t}}} f(x_{i},\theta)\bigr)\\
&\qquad\qquad{}\times \mathbb{E}_{\mathcal{N}(\mu_{t}^{(k)},\Sigma_{t}^{(k)})}[\nabla_{\theta}f(x_{i},\theta)\otimes(\theta-\mu_{t}^{(k)})+(\theta-\mu_{t}^{(k)})\otimes\nabla_{\theta}f(x_{i},\theta)]\,.
\end{aligned}}
\end{align*}
\end{theorem}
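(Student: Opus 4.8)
The plan is to specialize Theorem~\ref{thm:gmm_flow} to the regression loss $\loss(\rho_\nu) \deq \frac1n\sum_{i=1}^n\bigl(y_i - \int f(x_i,\theta)\,\rho_\nu(\D\theta)\bigr)^2$. That theorem already guarantees that the Gaussian mixture gradient flow stays a $K$-atom measure $\nu_t = K^{-1}\sum_k \delta_{(\mu_t^{(k)},\Sigma_t^{(k)})}$ and reduces the problem to evaluating, for each $k$, the two quantities $\E_{\mc N(\mu_t^{(k)},\Sigma_t^{(k)})}\nabla\delta\loss(\rho_{\nu_t})$ and $\E_{\mc N(\mu_t^{(k)},\Sigma_t^{(k)})}\nabla^2\delta\loss(\rho_{\nu_t})$ appearing in its boxed ODE system. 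So the whole task is to put these two objects in closed form, which is the same computation already carried out in the proof of Theorem~\ref{thm:regression-gaussian}, only now performed at the level of the first variation rather than the Euclidean gradients in $(\mu,\Sigma)$.

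First I would expand the quadratic loss exactly as in the proof of Theorem~\ref{thm:regression-gaussian}, writing $\loss(\rho_\nu) = \ell_0 + 2\int V(\theta)\,\rho_\nu(\D\theta) + \iint U(\theta,\theta')\,\rho_\nu(\D\theta)\,\rho_\nu(\D\theta')$ with $V$ and $U$ as in \eqref{eq:defUV}. Since $\loss$ is quadratic in $\rho_\nu$ and $U$ is symmetric, its first variation is $\delta\loss(\rho_\nu)(\theta) = 2V(\theta) + 2\int U(\theta,\theta')\,\rho_\nu(\D\theta')$, which after substituting \eqref{eq:defUV} collapses, up to an irrelevant additive constant, to $\delta\loss(\rho_\nu)(\theta) = -\frac2n\sum_{i=1}^n\bigl(y_i - \E_{\rho_\nu}f(x_i,\theta)\bigr)\,f(x_i,\theta)$. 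Differentiating in $\theta$ gives $\nabla_\theta\delta\loss(\rho_\nu)(\theta) = -\frac2n\sum_i\bigl(y_i - \E_{\rho_\nu}f(x_i,\theta)\bigr)\,\nabla_\theta f(x_i,\theta)$, and substituting this into the first boxed equation of Theorem~\ref{thm:gmm_flow} immediately yields the claimed ODE for $\dot\mu_t^{(k)}$.

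For the covariance equation I need $\E_{\mc N(\mu_t^{(k)},\Sigma_t^{(k)})}\nabla^2_\theta\delta\loss(\rho_{\nu_t})$. Rather than differentiating $f$ a second time---problematic because of the non-smoothness of ReLU---I would use the Gaussian integration-by-parts identities from Appendix~\ref{sec:auxiliary} (the ones invoked in steps (a)--(c') of the proof of Theorem~\ref{thm:regression-gaussian}) to transfer one derivative onto the Gaussian density, producing the two equivalent representations $\E_{\mc N(\mu,\Sigma)}\nabla^2_\theta g = \Sigma^{-1}\,\E_{\mc N(\mu,\Sigma)}\bigl[(\theta-\mu)\otimes\nabla_\theta g\bigr]$ and $\E_{\mc N(\mu,\Sigma)}\nabla^2_\theta g = \E_{\mc N(\mu,\Sigma)}\bigl[\nabla_\theta g\otimes(\theta-\mu)\bigr]\,\Sigma^{-1}$, with $g = \delta\loss(\rho_{\nu_t})$, so that only $\nabla_\theta f$ ever appears. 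Inserting the first representation into the term $\Sigma_t^{(k)}\,\E[\,\cdot\,]$ and the second into $\E[\,\cdot\,]\,\Sigma_t^{(k)}$ of the covariance ODE in Theorem~\ref{thm:gmm_flow} makes the factors $\Sigma_t^{(k)}$ and $(\Sigma_t^{(k)})^{-1}$ cancel, leaving precisely the symmetrized quantity $\E_{\mc N(\mu_t^{(k)},\Sigma_t^{(k)})}\bigl[\nabla_\theta f(x_i,\theta)\otimes(\theta-\mu_t^{(k)}) + (\theta-\mu_t^{(k)})\otimes\nabla_\theta f(x_i,\theta)\bigr]$, weighted by $\frac2n\bigl(y_i - \E_{\rho_{\nu_t}}f(x_i,\theta)\bigr)$, which is the stated ODE for $\dot\Sigma_t^{(k)}$.

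The only real subtlety---hence the main obstacle---is bookkeeping: one must keep all expressions in terms of $\nabla_\theta f$ (which, as the paper notes, is only defined $\rho$-almost everywhere, as an element of $L^2(\rho)$) and never $\nabla^2_\theta f$, and one must place the two integration-by-parts representations on the correct side so that the left- and right-multiplications by $\Sigma_t^{(k)}$ in Theorem~\ref{thm:gmm_flow} cancel to give the symmetric sum rather than a one-sided expression; everything else reduces to the routine quadratic-loss manipulations already recorded for $K=1$. As an alternative route, one could bypass Theorem~\ref{thm:gmm_flow} and invoke Theorem~\ref{thm:main_gmm} directly, computing the Euclidean gradients of $\ell(\bs\mu,\bs C) = \loss(\rho_\nu)$ in each $(\mu_k,C_k)$: because $\rho_\nu$ depends linearly on the $k$-th component $\mc N(\mu_k,\Sigma_k)$, the self-interaction part of $U$ contributes an expectation against $\mc N(\mu_t^{(k)},\Sigma_t^{(k)})$ while the cross terms reassemble into the full-mixture expectation $\E_{\rho_{\nu_t}}f(x_i,\theta)$ inside the residual, reproducing the same system.
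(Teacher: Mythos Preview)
Your proposal is correct and follows essentially the same approach as the paper: compute the first variation of the quadratic loss in terms of $U$ and $V$ from \eqref{eq:defUV}, then plug into Theorem~\ref{thm:gmm_flow} and use the Gaussian integration-by-parts identities of Appendix~\ref{sec:auxiliary} to rewrite the Hessian expectation in terms of $\nabla_\theta f$ only. Your write-up is in fact more careful than the paper's sketch---you correctly include the factor of $2$ in the first variation (the paper's displayed $\delta\loss(\rho)$ omits it), and you spell out why the two asymmetric forms of $\E_{\mc N(\mu,\Sigma)}\nabla^2 g$ must be paired with the left- and right-multiplications by $\Sigma_t^{(k)}$ to produce the symmetric expression, which the paper subsumes under ``performing integration by parts.''
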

\begin{proof}
    The first variation is given by
    \begin{align*}
        \delta \loss(\rho) : \theta \mapsto V(\theta) + \int U(\theta,\theta')\,\rho(\D \theta')\,,
    \end{align*}
    where $U$ and $V$ are as in~\eqref{eq:defUV}.
    Then, we can apply Theorem~\ref{thm:gmm_flow}, which is seen to yield the desired equations after substituting in the definitions of $U$ and $V$ and performing integration by parts.
\end{proof}

As before, we write out more explicit expressions for the gradient flow.
Thankfully, we can reuse our previous calculations.
Compared to Appendix~\ref{app:regression_gaussian}, we only need to compute 
\[
C_{i}'\coloneqq\mathbb{E}_{\rho_\nu}f(x_{i},\theta)=\mathbb{E}_{\rho_{\nu}}[\omega\ReLU(\beta^\top x_i)]
=\frac{1}{K}\sum_{k=1}^{K}\mathbb{E}_{\mathcal{N}(\mu^{(k)},\Sigma^{(k)})}[\omega\ReLU(\beta^\top x_i)]=\frac{1}{K}\sum_{k=1}^{K}C_{i}^{(k)}\,.
\]
Then, the update becomes
\[
\mu_{t+1}^{(k)}=\mu_{t}^{(k)}+\eta g_{t}^{(k)}\qquad\text{and}\qquad\Sigma_{t+1}^{(k)}=\Sigma_{t}^{(k)}+\eta G_{t}^{(k)}
\]
where
\[
g_{t}^{(k)}=\frac{2}{n}\sum_{i=1}^{n}(y_{i}-C_{i}')\begin{bmatrix}
A_{i}\\
B_{i}x_{i}
\end{bmatrix}
\]
and
\begin{align*}
G_{t}^{(k)}
& =\frac{2}{n}\sum_{i=1}^{n}(y_{i}-C_{i}')\,\Bigl\{ \begin{bmatrix}
C_{i} & [P_{i,j}]_{1\leq j\leq d}\\[0.25em]
D_{i}x_{i} & x_{i}\otimes[Q_{i,j}]_{1\leq j\leq d}
\end{bmatrix}-\begin{bmatrix}
A_{i}\\
B_{i}x_{i}
\end{bmatrix}\otimes\mu\Bigr\} \\
 & \qquad+\frac{2}{n}\sum_{i=1}^{n}(y_{i}-C_{i}')\,\Bigl\{ \begin{bmatrix}
C_{i} & D_{i}x_{i}^{\top}\\
[P_{i,j}]_{1\leq j\leq d} & [Q_{i,j}]_{1\leq j\leq d}\otimes x_{i}
\end{bmatrix}-\mu\otimes\begin{bmatrix}
A_{i}\\
B_{i}x_{i}
\end{bmatrix}\Bigr\}\, .
\end{align*}

\subsection{Multi-class classification}

While our implementation relies on PyTorch's Automatic Differentiation engine, we present here exact computations of gradients for multi-class classification. Their complexity indicates that it is largely preferable to employ automatic differentiation and that a study of first-order optimality conditions appears challenging.

For this setting, we focus on the parametrization as described in Subsection~\ref{subsec:reducing_param}, that is, $\beta \sim \mc N(\mu^\beta, \Sigma)$ and $\E[\omega \mid \beta] = U\beta + v$, where we apply the Euclidean gradient flow to the parameters $\mu^\beta$, $U$, $v$, and the square root of $\Sigma$.
To simplify the notation, we write $\mu = \mu^\beta$.
(The sparse parametrization in Subsection~\ref{subsec:reducing_param} corresponds to further restricting $\Sigma$ to be diagonal.)

We first compute the gradients w.r.t.\ $U$ and $v$. Write
\[
U=\begin{bmatrix}
u_{1}^{\top}\\
\vdots\\
u_{L}^{\top}
\end{bmatrix}\qquad\text{and}\qquad v=\begin{bmatrix}
v_{1}\\
\vdots\\
v_{L}
\end{bmatrix}\,.
\]
The following expectations are understood to be taken over the Gaussian mixture. We have
\begin{align*}
    \mathbb{E}\nabla_{\beta}\bigl(\ReLU(\beta^\top x )\,u^{\top}\beta\bigr)
    &=\mathbb{E}[\nabla_{\beta} \ReLU(\beta^\top x )\,u^{\top}\beta+\ReLU(\beta^\top x )\,u] \\
    &=\mathbb{E}[u^{\top}\beta\,\ReLU'(\beta^\top x)\,\beta]+\mathbb{E}[\ReLU(\beta^\top x )]\,u\,.
\end{align*}
The loss function is
\begin{align*}
\loss(\rho) & =-\frac{1}{n}\sum_{i=1}^{n}\Bigl\{ \sum_{\ell=1}^{L}\ind_{y_i=\ell}\int f_{\ell}(x_{i},\theta)\,\rho(\D\theta)-\log\Bigl[1+\sum_{\ell=1}^{L}\exp\int f_{\ell}(x_{i},\theta)\,\rho(\D\theta)\Bigr]\Bigr\} 
\end{align*}
where
\[
\int f_{\ell}(x_{i},\theta)\,\rho(\D\theta)
=\mathbb{E}[\ReLU(\beta^\top x ) \,u_{\ell}^{\top}\beta]+v_{\ell}\,\mathbb{E} \ReLU(\beta^\top x )\,.
\]
We can compute
\begin{align*}
\nabla_{u_{\ell}}\loss(\rho) & =-\frac{1}{n}\sum_{i=1}^{n}\Bigl\{ \ind_{y_{i}=\ell}-\frac{\exp\int f_{\ell}(x_{i},\theta)\,\rho(\D\theta)}{1+\sum_{\ell'=1}^{L}\exp\int f_{\ell'}(x_{i},\theta)\,\rho(\D\theta)}\Bigr\}\, \mathbb{E}[\ReLU(\beta^\top x )\,\beta]\,,\\
\nabla_{v_{\ell}}\loss(\rho) & =-\frac{1}{n}\sum_{i=1}^{n}\Bigl\{ \ind_{y_{i}=\ell}-\frac{\exp\int f_{\ell}(x_{i},\theta)\,\rho(\D\theta)}{1+\sum_{\ell'=1}^{L}\exp\int f_{\ell'}(x_{i},\theta)\,\rho(\D\theta)}\Bigr\}\, \mathbb{E}\ReLU(\beta^\top x)\,.
\end{align*}
Next, we can compute
\begin{align*}
\nabla_{\mu}\loss(\rho)
& =-\frac{1}{n}\sum_{i=1}^{n}\Bigl\{ \sum_{\ell=1}^{L}\ind_{y_{i}=\ell}\nabla_{\mu}\int f_{\ell}(x_{i},\theta)\,\rho(\D\theta)\\
&\qquad\qquad\qquad\qquad\qquad{} -\frac{\sum_{\ell=1}^{L}\exp(\int f_{\ell}(x_{i},\theta)\,\rho(\D\theta))\,\nabla_{\mu}\int f_{\ell}(x_{i},\theta)\,\rho(\D\theta)}{1+\sum_{\ell=1}^{L}\exp\int f_{\ell}(x_{i},\theta)\,\rho(\D\theta)}\Bigr\} \\
 & =-\frac{1}{n}\sum_{i=1}^{n}\sum_{\ell=1}^{L}\Bigl\{ \ind_{y_{i}=\ell}-\frac{\exp\int f_{\ell}(x_{i},\theta)\,\rho(\D\theta)}{1+\sum_{\ell=1}^{L}\exp\int f_{\ell}(x_{i},\theta)\,\rho(\D\theta)}\Bigr\}\, \nabla_{\mu}\int f_{\ell}(x_{i},\theta)\,\rho(\D\theta)
\end{align*}
and similarly,
\[
\nabla_{\Sigma}\loss(\rho)=-\frac{1}{n}\sum_{i=1}^{n}\sum_{\ell=1}^{L}\Bigl\{ \ind_{y_{i}=\ell}-\frac{\exp\int f_{\ell}(x_{i},\theta)\,\rho(\D\theta)}{1+\sum_{\ell=1}^{L}\exp\int f_{\ell}(x_{i},\theta)\,\rho(\D\theta)}\Bigr\}\, \nabla_{\Sigma}\int f_{\ell}(x_{i},\theta)\,\rho(\D\theta)\,.
\]
We have
\begin{align*}
\nabla_{\mu}\int f_{\ell}(x_{i},\theta)\,\rho(\D\theta) & =\nabla_{\mu}\mathbb{E}[\ReLU(\beta^\top x )\,u_{\ell}^{\top}\beta]+\nabla_{\mu}v_{\ell}\,\mathbb{E}\ReLU(\beta^\top x )\\
 & =\mathbb{E}\nabla_{\beta}\bigl(\ReLU(\beta^\top x )\,u_{\ell}^{\top}\beta\bigr)+v_{\ell}\,\mathbb{E} \nabla_{\beta}\ReLU(\beta^\top x )\\
 & =\mathbb{E}[\ReLU'(\beta^\top x )\,u_{\ell}^{\top}\beta]\,x+\mathbb{E}[\ReLU(\beta^\top x )]\,u_{\ell}+v_{\ell}\,\mathbb{E}[\ReLU'(\beta^\top x )]\,x\,.
\end{align*}
We also have
\begin{align*}
\nabla_{\Sigma}\int f_{\ell}(x_{i},\theta)\,\rho(\D\theta) & =\nabla_{\Sigma}\mathbb{E}[\ReLU(\beta^\top x )\,u_{\ell}^{\top}\beta+v_{\ell}\,\ReLU(\beta^\top x )]\,,
\end{align*}
where
\begin{align*}
\nabla_{\Sigma}\mathbb{E}[\ReLU(\beta^\top x )\,u_{\ell}^{\top}\beta]
& =\frac{1}{2}\,\mathbb{E}\bigl[\nabla_{\beta}[\ReLU(\beta^\top x )\,u_{\ell}^{\top}\beta]\otimes(\beta-\mu)\bigr]\,\Sigma^{-1} \\
&=\frac{1}{2}\,\Sigma^{-1}\,\mathbb{E}\bigl[(\beta-\mu)\otimes\nabla_{\beta}[\ReLU(\beta^\top x )\,u_{\ell}^{\top}\beta]\bigr]\\
 & =\frac{1}{2}\,\mathbb{E}\bigl[\bigl(\ReLU'(\beta^\top x )\,u_{\ell}^{\top}\beta\,x+\ReLU(\beta^\top x )\,u_{\ell}\bigr)\otimes(\beta-\mu)\bigr]\,\Sigma^{-1}\\
 &=\frac{1}{2}\,\Sigma^{-1}\,\mathbb{E}\bigl[(\beta-\mu)\otimes\bigl(\ReLU'(\beta^\top x )\,u_{\ell}^{\top}\beta\,x+\ReLU(\beta^\top x )\,u_{\ell}\bigr)\bigr]
\end{align*}
and
\begin{align*}
\nabla_{\Sigma}\mathbb{E}\ReLU(\beta^\top x )
& =\frac{1}{2}\,\mathbb{E}[\nabla_{\beta} \ReLU(\beta^\top x ) \otimes(\beta-\mu)]\,\Sigma^{-1}=\frac{1}{2}\,\Sigma^{-1}\,\mathbb{E}[(\beta-\mu)\otimes\nabla_{\beta}\ReLU(\beta^\top x )]\\
 & =\frac{1}{2}\,\mathbb{E}[\ReLU'(\beta^\top x )\,x\otimes(\beta-\mu)]\,\Sigma^{-1}=\frac{1}{2}\,\Sigma^{-1}\,\mathbb{E}[(\beta-\mu)\otimes\ReLU'(\beta^\top x )\,x]\,.
\end{align*}
Hence, we have
\begin{align*}
\dot{\mu} & =-\nabla_{\mu}\loss(\rho)
=\frac{1}{n}\sum_{i=1}^{n}\sum_{\ell=1}^{L}\Bigl[\ind_{y_{i}=\ell}-\frac{\exp\int f_{\ell}(x_{i},\theta)\,\rho(\D\theta)}{1+\sum_{\ell=1}^{L}\exp\int f_{\ell}(x_{i},\theta)\,\rho(\D\theta)}\Bigr]\,\nabla_{\mu}\int f_{\ell}(x_{i},\theta)\,\rho(\D\theta)\\
 & =\frac{1}{n}\sum_{i=1}^{n}\sum_{\ell=1}^{L}\Bigl[\ind_{y_{i}=\ell}-\frac{\exp\int f_{\ell}(x_{i},\theta)\,\rho(\D\theta)}{1+\sum_{\ell=1}^{L-1}\exp\int f_{\ell}(x_{i},\theta)\,\rho(\D\theta)}\Bigr] \\
 &\qquad\qquad\qquad{}\times \bigl[\mathbb{E}[\ReLU'(\beta^\top x ) \,u_{\ell}^{\top}\beta]\,x+\mathbb{E}[\ReLU(\beta^\top x )]\,u_{\ell}+v_{\ell}\,\mathbb{E}[\ReLU'(\beta^\top x )]\,x\bigr]\\
 & =\frac{1}{n}\sum_{i=1}^{n}\sum_{\ell=1}^{L-1}\Bigl[\ind_{y_{i}=\ell}-\frac{\exp\int f_{\ell}(x_{i},\theta)\,\rho(\D\theta)}{1+\sum_{\ell=1}^{L-1}\exp\int f_{\ell}(x_{i},\theta)\,\rho(\D\theta)}\Bigr] \\
 &\qquad\qquad\qquad{}\times \bigl[u_{\ell}^{\top}\mathbb{E}[\ReLU'(\beta^\top x_i)\,\beta]\,x_{i}+\mathbb{E}[\ReLU(\beta^\top x_i)]\,u_{\ell}+v_{\ell}\,\mathbb{E}[\ReLU'(\beta^\top x )]\,x\bigr]
\end{align*}
and
\begin{align*}
\dot{\Sigma}
& = -2\,\nabla_{\Sigma}\loss(\rho)\,\Sigma-2\,\Sigma\,\nabla_{\Sigma}\loss(\rho)\\
 & =\frac{1}{n}\sum_{i=1}^{n}\sum_{\ell=1}^{L-1}\Bigl[\ind_{y_{i}=\ell}-\frac{\exp\int f_{\ell}(x_{i},\theta)\,\rho(\D\theta)}{1+\sum_{\ell=1}^{L-1}\exp\int f_{\ell}(x_{i},\theta)\,\rho(\D\theta)}\Bigr]\\
 &\qquad\quad {}\times \bigl\{\mathbb{E}\bigl[\bigl(\ReLU'(\beta^\top x_i)\,u_{\ell}^{\top}\beta\,x_{i}+\ReLU(\beta^\top x_i)\,u_{\ell}+\ReLU'(\beta^\top x_i)\,x_{i}\bigr)\otimes(\beta-\mu)\bigr] \\
 &\qquad\qquad\quad+\mathbb{E}\bigl[\bigl(\ReLU'(\beta^\top x_i)\,u_{\ell}^{\top}\beta\,x_{i}+\ReLU(\beta^\top x_i)\,u_{\ell}+\ReLU'(\beta^\top x_i)\,x_{i}\bigr)\otimes(\beta-\mu)\bigr]^\top\bigr\}\,.
\end{align*}
Let
\[
A_{i}=\mathbb{E}\ReLU(\beta^\top x_i)\,,\qquad F_{i}=\mathbb{E}\ReLU'(\beta^\top x_i)\,,\qquad R_{i,j}=\mathbb{E}[\ReLU'(\beta^\top x_i)\,\beta_{j}]\,,
\]
and
\[
P_{i,j}=\mathbb{E}[\ReLU(\beta^\top x_i)\,\beta_{j}]\,,\qquad Q_{i,j,\ell}=\mathbb{E}[\ReLU'(\beta^\top x_i)\,u_{\ell}^{\top}\beta\,\beta_{j}]\,,\qquad S_{i,\ell} = \mathbb{E}[\omega_{\ell}\ReLU(\beta^\top x_i)]\,.
\]
We have
\begin{align*}
\dot{\mu}
& =\frac{1}{n}\sum_{i=1}^{n}\sum_{\ell=1}^{L-1}\Bigl[\ind_{y_{i}=\ell}-\frac{\exp S_{i,\ell}}{1+\sum_{\ell'=1}^{L-1}\exp S_{i,\ell'}}\Bigr]\,[u_{\ell}^{\top}P_{i} \,x_{i}+A_{i}u_{\ell}+v_{\ell}\,F_{i}x_{i}]
\end{align*}
and
\begin{align*}
\dot{\Sigma}
& =\frac{1}{n}\sum_{i=1}^{n}\sum_{\ell=1}^{L-1}\Bigl[\ind_{y_{i}=\ell}-\frac{\exp S_{i,\ell}}{1+\sum_{\ell'=1}^{L-1}\exp S_{i,\ell'}}\Bigr] \\
&\qquad{}\times \big[(x_{i}\otimes Q_{i,\ell}+u_{\ell}\otimes P_{i}+x_{i}\otimes R_{i}- u_{\ell}^{\top}R_{i}\,x_{i}\otimes\mu-A_{i}u_{\ell}\otimes\mu-F_{i}x_{i}\otimes\mu)+ \\
& \qquad\qquad + (x_{i}\otimes Q_{i,\ell}+u_{\ell}\otimes P_{i}+x_{i}\otimes R_{i}- u_{\ell}^{\top}R_{i}\,x_{i}\otimes\mu-A_{i}u_{\ell}\otimes\mu-F_{i}x_{i}\otimes\mu)^\top \big]\,.
\end{align*}
We also have
\[
\dot{U}
=-\begin{bmatrix}
\nabla_{u_{1}}\loss(\rho)^{\top}\\
\vdots\\
\nabla_{u_{L-1}}\loss(\rho)^{\top}
\end{bmatrix}
=\frac{1}{n}\sum_{i=1}^{n}\begin{bmatrix}
\ind_{y_{i}=1}-\frac{\exp S_{i,1} }{1+\sum_{\ell'=1}^{L-1}\exp S_{i,\ell'}}\\
\vdots\\
\ind_{y_{i}=L-1}-\frac{\exp S_{i,L-1}}{1+\sum_{\ell'=1}^{L-1}\exp S_{i,\ell'}}
\end{bmatrix}\otimes P_{i}
\]
and
\[
\dot{v}
=-\begin{bmatrix}
\nabla_{v_{1}}\loss(\rho)^{\top}\\
\vdots\\
\nabla_{v_{L-1}}\loss(\rho)^{\top}
\end{bmatrix}=\frac{1}{n}\sum_{i=1}^{n}\begin{bmatrix}
\ind_{y_{i}=1}-\frac{\exp S_{i,1}}{1+\sum_{\ell'=1}^{L-1}\exp S_{i,\ell'}}\\
\vdots\\
\ind_{y_{i}=L-1}-\frac{\exp S_{i,L-1}}{1+\sum_{\ell'=1}^{L-1}\exp S_{i,\ell'}}
\end{bmatrix}A_{i}\,.
\]

Next, we compute each quantity.
As before, we consider~\eqref{eq:defXYZ}, except now we define $X = u_\ell^\top \beta$.
This time, we have
\[
\begin{bmatrix}
\mu_{1}\\
\mu_{2} \\
\mu_{3}
\end{bmatrix}=\begin{bmatrix}
u_{\ell}^{\top}\mu\\[0.25em]
x_{i}^{\top}\mu\\[0.25em]
e_{j}^{\top}\mu
\end{bmatrix}\,,\qquad \begin{bmatrix}
\sigma_{1}^{2} & \rho_{1,2}\sigma_{1}\sigma_{2} & \rho_{1,3}\sigma_{1}\sigma_{3}\\
\rho_{1,2}\sigma_{1}\sigma_{2} & \sigma_{2}^{2} & \rho_{2,3}\sigma_{2}\sigma_{3}\\
\rho_{1,3}\sigma_{1}\sigma_{3} & \rho_{2,3}\sigma_{2}\sigma_{3} & \sigma_{3}^{2}
\end{bmatrix}=\begin{bmatrix}
u_{\ell}^{\top}\\[0.25em]
x_{i}^{\top}\\[0.25em]
e_{j}^{\top}
\end{bmatrix}\Sigma \begin{bmatrix}
u_{\ell} & x_{i} & e_{j}\end{bmatrix}\,.
\]
Then, as before, we have
\[
A_{i}=\mathbb{E}[\max\{ Y,0\} ]\,,\qquad F_{i}=\mathbb{P}(Y>0)\,,\qquad R_{i,j}=\mathbb{E}[Z\ind_{Y>0}]\,,
\]
and
\[
P_{i,j}=\mathbb{E}[\max\{ Y,0\} Z]\qquad\text{and}\qquad Q_{i,j,\ell}=\mathbb{E}[X\ind_{Y>0}Z]\,.
\]
The only new quantity to compute is
\begin{align*}
R_{i,j} & =\mathbb{E}[Z\ind_{Y>0}]=\mathbb{E}[(Z-\gamma Y)\ind_{Y>0}]+\mathbb{E}[\gamma Y\ind_{Y>0}]\\
 & =\mathbb{E}[Z-\gamma Y]\,\mathbb{P}(Y>0)+\gamma\,\mathbb{E}[\max\{ Y,0\} ]\\
 &= F_i M_{i,j}+\gamma A_{i}
\end{align*}
and
\[
S_{i,\ell}=\mathbb{E}[\omega_{\ell}\ReLU(\beta^\top x_i)]=\mathbb{E}[\ReLU(\beta^\top x_i) \,u_{\ell}^{\top}\beta]+v_{\ell}\,\mathbb{E}\ReLU(\beta^\top x_i)=C_{i}+v_{\ell}A_{i}\,.
\]

\subsection{Auxiliary Gaussian calculus\label{sec:auxiliary}}

Recall that $\phi(\cdot\mid \mu,\Sigma)$ is the density function of $\mathcal{N}(\mu,\Sigma)$.
Then we can compute the gradient and Hessian w.r.t.~the variable
$ \theta $:
\begin{align}
\nabla_{ \theta }\phi( \theta  \mid \mu,\Sigma)
& =\phi( \theta  \mid \mu,\Sigma)\,\nabla_{ \theta }\bigl(-\frac{1}{2}\, \theta ^{\top}\Sigma^{-1} \theta + \theta ^{\top}\Sigma^{-1}\mu-\frac{1}{2}\,\mu^{\top}\Sigma^{-1}\mu\bigr)\nonumber \\
 & =-\phi( \theta  \mid \mu,\Sigma)\,\Sigma^{-1}( \theta -\mu)\,,\label{eq:gaussian-eq-0}\\
\nabla_{ \theta }^{2}\phi( \theta  \mid \mu,\Sigma)
& =-\Sigma^{-1}( \theta -\mu)\,[\nabla_{ \theta }\phi( \theta  \mid \mu,\Sigma)]^{\top}-\phi( \theta  \mid \mu,\Sigma)\,\Sigma^{-1}\nonumber \\
 & =\phi( \theta  \mid \mu,\Sigma)\,\Sigma^{-1}( \theta -\mu)( \theta -\mu)^{\top}\Sigma^{-1}-\phi( \theta  \mid \mu,\Sigma)\,\Sigma^{-1}\,.\nonumber 
\end{align}
In addition, we can also compute the gradient w.r.t.~the parameters
$\mu$ and $\Sigma$:
\begin{align}
\nabla_{\mu}\phi( \theta  \mid \mu,\Sigma) & =\phi( \theta  \mid \mu,\Sigma)\,\nabla_{\mu}\bigl(-\frac{1}{2}\, \theta ^{\top}\Sigma^{-1} \theta + \theta ^{\top}\Sigma^{-1}\mu-\frac{1}{2}\,\mu^{\top}\Sigma^{-1}\mu\bigr)\nonumber \\
 & =\phi( \theta  \mid \mu,\Sigma)\,\Sigma^{-1}( \theta -\mu)\nonumber \\
 & =-\nabla_{ \theta }\phi( \theta  \mid \mu,\Sigma)\,,\label{eq:gaussian-eq-1}
\end{align}
and
\begin{align}
\nabla_{\Sigma}\phi( \theta  \mid \mu,\Sigma)
& =\frac{1}{\sqrt{(2\pi)^{d}\det \Sigma}}\nabla_{\Sigma}\exp\bigl[-\frac{1}{2}\,( \theta -\mu)^{\top}\Sigma^{-1}( \theta -\mu)\bigr]\nonumber \\
 & \qquad{} +\exp\bigl[-\frac{1}{2}\,( \theta -\mu)^{\top}\Sigma^{-1}( \theta -\mu)\bigr]\,\nabla_{\Sigma}\frac{1}{\sqrt{(2\pi)^{d}\det \Sigma}}\nonumber \\
 & =\frac{1}{2}\,\phi( \theta  \mid \mu,\Sigma)\,\Sigma^{-1}( \theta -\mu)( \theta -\mu)^{\top}\Sigma^{-1}-\frac{1}{2}\,\phi( \theta  \mid \mu,\Sigma)\,\frac{1}{\det \Sigma}\,\nabla_{\Sigma}\det \Sigma\nonumber \\
 & =\frac{1}{2}\,\phi( \theta  \mid \mu,\Sigma)\,\Sigma^{-1}( \theta -\mu)( \theta -\mu)^{\top}\Sigma^{-1}-\frac{1}{2}\,\phi( \theta  \mid \mu,\Sigma)\,\Sigma^{-1}\nonumber \\
 & =\frac{1}{2}\,\nabla_{ \theta }^{2}\phi( \theta  \mid \mu,\Sigma)\,.\label{eq:gaussian-eq-2}
\end{align}

\section{Experimental details}

The numerical experiments conducted in this paper are implemented with PyTorch in Python, using a 2023 MacBook Pro with Apple M2 Pro chip and 32GM memory. The fully-connected layers are implemented using PyTorch's built-in functions. The GM layers are implemented using the derivation in Appendix~\ref{app:derivations} (thanks to PyTorch's Automatic Differentiation engine, we only need to implement the loss function, and there is no need to implement the gradients explicitly).

The error bars in Figures~\ref{fig:GMM-test-error}, \ref{fig:NN-test-error}, \ref{fig:fix_beta}, \ref{fig:subsampling} and \ref{fig:two-layers} represent one standard error computed over 5 independent trials. When implementing a two-layer GM network, the output of the first GM layer is normalized to have unit norm, before being sent to the second layer as input.

\end{document}